 \newcommand{\R}{\mathbb{R}}
 \newcommand{\N}{\mathbb{N}}
  \newcommand{\V}{\mathbb{V}}
  \newcommand{\D}{\mathcal{D}}
  \newcommand{\dn}{\mathbf{d}}
 \newcommand{\diag}{\mathrm{diag}}
  \newcommand{\zero}{\mathbf{0}}
  \DeclareMathOperator{\interior}{int}
    \DeclareMathOperator{\trace}{trace}
  \DeclareMathOperator{\argmin}{argmin}
\newtheorem{definition}{Definition}
\newtheorem{lemma}{Lemma}
\newtheorem{theorem}{Theorem}
\newtheorem{corollary}{Corollary}
\newtheorem{proposition}{Proposition}
\title{Homogeneous Artificial Neural Network}
\author{Andrey Polyakov\thanks{Univ. Lille, Inria, CNRS, UMR 9189  CRIStAL, Centralle Lille, F-59000 Lille, France, (andrey.polyakov@inria.fr)}}
\date{}
\begin{document}


\maketitle
\begin{abstract}
	The paper proposes an artificial neural network (ANN) being a global approximator for a special class of functions, which are known as \textit{generalized homogeneous}. The homogeneity means   a symmetry of a function with respect to a group of  transformations having topological characterization of a dilation.  In this paper, a class of the so-called linear dilations is considered. A homogeneous universal approximation theorem is proven. Procedures for an upgrade of an existing ANN to a homogeneous one are developed. Theoretical results are supported by  examples from the various domains (computer science, systems theory and automatic control).  
\end{abstract}
\section{Introduction}
\subsection{State of the art}
The universal approximation theorems    \cite{Cybenko1989:MCSS}, \cite{Hornik_etal1989:NN}, \cite{Hornik1991:NN} put limits on what artificial neural networks (ANNs) can theoretically learn. 	These theorems guarantee  the existence of ANN, which approximates  a continuous function on a compact set
with  an arbitrary high precision. A  training of the ANN  is based a compactly supported data as well, while, the trained ANN may be utilized next as a predictor of the function value for an input data, which does not belong to the training set. Sometimes, the new input may be have a rather large distance even from a convex hull of the training set. In the latter case, the ANN is utilized as an extrapolator of the function and the approximation theorems are not applicable.  Moreover, the analysis of the extrapolation error is impossible if  there is no information about the function away from the training set. Therefore, a global extrapolation of a function based on a local data can be provided only under additional assumption about  the class of functions approximated by ANN. This paper deals with approximation of the so-called  generalized homogeneous functions \cite{Zubov1958:IVM}, \cite{Kawski1991:ACDS}, \cite{BhatBernstein2005:MCSS}, \cite{Polyakov2020:Book} and introduces the corresponding homogeneous artificial neural network, which key feature is a global approximation based on local data. 

The generalized homogeneity  is a symmetry of an object (a function, a set, a vector field, etc) with respect to a group of the so-called generalized dilations. Recall that the standard (classical) homogeneity were introduced by Leonhard Euler in 18th century as the symmetry  of a function $x\mapsto f(x)$ 
with respect to the standard (isotropic) dilation of its argument $x\mapsto \lambda x$, namely, $f(\lambda x)=\lambda^{\nu} f(x), \forall \lambda>0, \forall x$, where $\nu\in \R$ is  the homogeneity degree. To the best of author's knowledge, generalized (anisotropic) dilations 
are studied in systems theory since 1950s (see, e.g., 
\cite{Zubov1958:IVM}), where, instead of multiplication of the argument $x$  on a scalar $\lambda>0$, some group of transformations having topological characterization of a dilation is utilized (see, e.g.  \cite{Khomenuk1961:IVM}, \cite{Kawski1991:ACDS}, \cite{Husch1970:Math_Ann}). In particular, if $\dn(s):\V\mapsto \V, s\in \R$ is a one parameter group of coordinate transformations on a normed vector space $\V$ satisfying $\lim_{s\to \pm\infty}\|\dn(s)x\|=e^{\pm \infty}, \forall x\neq \zero$ then $\dn$ is a dilation in $\V$ and a functional $h:\V\mapsto  \R$ satisfying $h(\dn(s)x)=e^{\nu s}h(x)$ is said to be  $\dn$-homogeneous of degree $\nu$.  In this framework, the standard dilation corresponds to the dilation $\dn(s)x=e^sx,s\in \R, x\in \V$.

Generalized homogeneous mathematical objects are studied in various domains of systems theory \cite{Folland1975:AM}, \cite{Hermes1986:SIAM_JCO}, \cite{Kawski1991:ACDS}, \cite{Rosier1992:SCL}, \cite{FischerRuzhansky2016:Book}. Many models of mathematical physics \cite{Polyakov2020:Book} are homogeneous in a generalized sense. Homogeneous models also appear as local approximations of nonlinear systems \cite{Andrieu_etal2008:SIAM_JCO}. The control systems design is frequently based on generalized homogeneity \cite{Levant2003:IJC}, \cite{Perruquetti_etal2008:TAC}, \cite{Polyakov_etal2015:Aut}. This paper deals with functions $\R^n\mapsto \R$, which are  symmetric ($\dn$-homogeneous) with respect to linear dilations $\dn(s)=e^{sG_{\dn}},s\in \R$, where $G_{\dn}\in \R^{n\times n}$ is an anti-Hurwitz matrix\footnote{All eigenvalues have positive real parts}.
The homogeneity implies certain equivalence of local and global information, since any local knowledge about a function can be expanded globally using the dilation symmetry. 

The dilation symmetry (in particular, standard homogeneity) is shown to be useful for analysis and learning of   
ANNs \cite{Neyshabur_etal2015:inBook}. However,  the dilation symmetry of the neural network parameters is usually studied, while the present paper studies a dilation symmetry of input-output relations.  The approximation procedure should take into account the symmetry (if any), otherwise, the geometry of the mapping $g$ may be lost. As shown in the examples given below, this may badly impacts extrapolation capabilities of the ANN. For example, the dilation symmetry (scaling invariance) is important for  the pattern recognition \cite{Bishop2006:Book}, which is expected to be invariant with respect to a zoom (standard dilation) of the image. The systems theory provides many other examples of the generalized homogeneous functions.

\subsection{Basic idea of homogeneous approximation}
Following the classical model of a controlled rotational motion of the rigid body in the 3-D space, the dynamics of the angular velocity of the body is governed  by 
\begin{equation}\label{eq:examp_ODE}
	\dot \omega(t)=J^{-1}(-\omega \times J\omega+\tau),
\quad t\geq 0,
\end{equation}
where 
$\omega(t)=(\omega_1(t),\omega_2(t),\omega_3(t))^{\top}\in \R^n$ is the angular velocity in the body frame, $J=J^{\top}\in \R^{n\times n}$ is the inertia matrix, $\tau(t)\in \R^3$ is the external momentum (e.g., control input) and $\times$ denotes the vector product, i.e., $
\omega \times J\omega=\left(\begin{smallmatrix} 0 & -\omega_3 & \omega_2\\ \omega_3 & 0 & -\omega_1\\ -\omega_2 & \omega_2 & 0\end{smallmatrix}\right)J\omega$.  
The function $h:\R^{6}\mapsto \R^3$ given by  $h(\omega,\tau)=-J^{-1}(\omega \times J\omega+\tau)$ is homogeneous of degree $2$ with respect to a generalized dilation:   $$h(e^{s}\omega,e^{2s}\tau)=e^{2 s}h(\omega,\tau), \quad \forall s\in \R, \forall \omega,\tau \in \R^3.$$ This means that the right-hand side of the differential equation \eqref{eq:examp_ODE} is homogeneous. Considering the problem of data-driven approximation the dynamic equation,  
if $h_{\epsilon}$ is an ANN approximation of $h$ on a compact $\Omega\subset \R^6$ such that $|h(\omega,\tau)-h_{\epsilon}(\omega,\tau)|\leq \epsilon$ for all $(\omega,\tau)\in \Omega$, then $h^{s}_{\epsilon}(\omega,\tau):=e^{2s}h_{\epsilon}(e^{-s}\omega,e^{-2s}\tau)$ is an approximation 
of $h$ on the set $\dn(s)\Omega$, where $\dn(s)=\diag(e^{s}I_3,e^{2s}I_3)$ is the generalized dilation.
Indeed, due to $\dn$-homogeneity of $h$, the approximation error for $h^s_{\epsilon}$ admits the estimate 
$
|h(\omega,\tau)-h^s_{\epsilon}(\omega,\tau)|=e^{2s}|h(e^{-s}\omega,e^{-2s}\tau)-h_{\epsilon}(e^{-s}\omega,e^{-2s}\tau)|\leq e^{2s}\epsilon
$ 
for all $(\omega,\tau)\in \dn(s)\Omega$.
Since $s\in \R^n$ is an arbitrary  number, then the mapping $h^{s}_{\epsilon}$  can be interpreted as as a homogeneous extrapolation of $h_{\epsilon}$ away from $\Omega$. This paper utilizes such an extrapolation  for a global ANN approximator design.
\subsection{Contributions}
The paper has two main contributions to the approximation theory by ANNs:\vspace{-2mm}
\begin{itemize}
	\item  
	A $\dn$-homogeneous ANN with  one hidden level is designed. Such an ANN is shown to be a global approximator of any continuous  $\dn$-homogeneous function on $\R^n$. The corresponding homogeneous approximation theorem is proven. 
	\item A methodology of  an upgrade (transformation) 
	of existing well-trained ANN to a homogeneous one is developed, justified by theoretical proofs and supported by numerical example. 
	 The upgrade assumes that a homogeneous mapping is already approximated by the conventional ANN locally, but the homogeneity parameters (such as the homogeneity degree or the dilation group) are partially known or unknown at all.  Based on available data, these parameters are identified and, next, the existing ANN is homogenized (transformed to a homogeneous ANN) in order to make it a global approximatior of the homogeneous function. This procedure can be interpreted as a \textit{homogeneous extrapolation of ANN}.
\end{itemize}	
\subsection{Structure of the paper}
Preliminaries about generalized homogeneous functions are given in Section 2. Homogeneous ANNs are introduced in Section 3, where, in particular, the homogeneous universal approximation theorem is proven.  Section 4 is devoted to upgrading the classical ANN to a homogeneous one. Numerical examples from computer science, systems theory and automatic control are given in Section 5, where, in particular, the problem of  pattern recognition with scaling invariance  and ANN identification of homogeneous dynamical system \eqref{eq:examp_ODE} are considered.   
\subsection{Notation}
$\R$ is the field of reals; $\R^n_{\zero}=\R^{n}\backslash\{\zero\}$, where $\zero$ is the zero element of a vector space (e.g., $\zero \in \R^n$ means that $\zero$ is the zero vector); $\|\cdot\|$ is a norm in $\R^n$ (to be specified later); 
the matrix norm for $A\in \R^{n\times n}$ is defined as $\|A\|=\sup_{x\neq \zero} \frac{\|Ax\|}{\|x\|}$;  $\lambda_{\min}(P)$ denote a minimal eigenvalue of a symmetric matrix $P=P^{\top}\in \R^{n\times n}$; $P\succ 0$ means that the symmetric matrix $P$ is positive definite; 
$C(\Omega_1,\Omega_2)$ denotes the set of continuous functions  $\Omega_1\subset \R^n\mapsto \Omega_2\subset \R^m$; $C_c(\R^n,\R)$ denotes the set of compactly supported continuous functions; $L^p(\R^n,\R)$ is the Lebesgue space.


\section{Generalized Homogeneity}

\subsection{Linear Dilation}
Let us recall that \textit{a family of  operators} $\dn(s):\R^n\mapsto \R^n$ with $s\in \R$ is  a one-parameter \textit{group} if
$\dn(0)x\!=\!x$, $\dn(s) \dn(t) x\!=\!\dn(s+t)x$, $\forall x\!\in\!\R^n, \forall s,t\!\in\!\R$.
A \textit{group} $\dn$ is 
a) \textit{continuous} if the mapping $s\mapsto \dn(s)x$ is continuous,  $\forall x\!\in\! \R^n$;
b) \textit{linear} if $\dn(s)$ is a linear mapping (i.e., $\dn(s)\in \R^{n\times n}$), $\forall s\in \R$;
c)  a \textit{dilation} \cite{Kawski1991:ACDS} in $\R^n$ if $\liminf\limits_{s\to +\infty}\|\dn(s)x\|=+\infty$ and $\limsup\limits_{s\to -\infty}\|\dn(s)x\|=0$,  $\forall x\neq \zero$.
Any linear continuous group in $\R^n$ admits the representation  \cite{Pazy1983:Book}\vspace{-2mm}
\begin{equation}\label{eq:Gd}
	\dn(s)=e^{sG_{\dn}}=\sum_{j=1}^{\infty}\tfrac{s^jG_{\dn}^j}{j!}, \quad s\in \R,\vspace{-2mm}
\end{equation}
where $G_{\dn}\in \R^{n\times n}$ is a generator of $\dn$. A continuous linear group \eqref{eq:Gd} is a dilation in $\R^n$ if and only if $G_{\dn}$ is an anti-Hurwitz matrix.  In this paper we deal only with continuous linear dilations. 
A \textit{dilation} $\dn$ in $\R^n$ is
i) \textit{monotone} if the function $s\mapsto \|\dn(s)x\|$ is strictly increasing,  $\forall x\neq \zero$;
ii) 	\textit{strictly monotone} if  $\exists \beta\!>\!0$ such that $\|\dn(s)x\|\!\leq\! e^{\beta s}\|x\|$, $\forall s\!\leq\! 0$, $\forall x\in \R^n$.\\
The following result is the straightforward consequence of the existence of the quadratic Lyapunov function for stable linear system \cite{Lyapunov1892:Book}.\vspace{-2mm}
\begin{corollary}\label{cor:monotonicity}
	A linear continuous dilation in $\R^n$ is strictly monotone with respect to the weighted Euclidean norm $\|x\|=\sqrt{x^{\top} Px}$ with $0\prec P\in \R^{n\times n}$ if and only if 
$	PG_{\dn}+G_{\dn}^{\top}P\succ 0,  P\succ 0.$\vspace{-1mm}
\end{corollary}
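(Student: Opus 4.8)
The plan is to reduce the claim to a differential inequality for the squared weighted norm along the flow generated by the dilation, and then apply a Gr\"onwall-type comparison. Fix $x\in\R^n$, $x\neq\zero$, and set $y(s):=\dn(s)x=e^{sG_{\dn}}x$, so that $\dot y(s)=G_{\dn}y(s)$ by \eqref{eq:Gd}. Let $V(s):=\|\dn(s)x\|^2=y(s)^{\top}Py(s)$, a smooth function with
\[
\dot V(s)=y(s)^{\top}\bigl(PG_{\dn}+G_{\dn}^{\top}P\bigr)y(s).
\]
Everything below hinges on relating the sign of the quadratic form $PG_{\dn}+G_{\dn}^{\top}P$ to the growth of $V$ for $s\le 0$.

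For the sufficiency direction, I would assume $P\succ 0$ and $Q:=PG_{\dn}+G_{\dn}^{\top}P\succ 0$. Positive definiteness of $P$ lets me pick a single $\beta>0$ (for instance $2\beta:=\lambda_{\min}(P^{-1/2}QP^{-1/2})$) with $Q\succeq 2\beta P$, so that $\dot V(s)\geq 2\beta V(s)$ for every $s$. Hence $s\mapsto e^{-2\beta s}V(s)$ is nondecreasing, and comparing its values at $s\leq 0$ and at $0$ gives $V(s)\leq e^{2\beta s}V(0)$, i.e. $\|\dn(s)x\|\leq e^{\beta s}\|x\|$ for all $s\leq 0$ and all $x$ --- that is, strict monotonicity with respect to $\|\cdot\|=\sqrt{(\cdot)^{\top}P\,(\cdot)}$.

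For the necessity direction, I would assume $P\succ 0$ and that $\dn$ is strictly monotone, i.e. $\|\dn(s)x\|\leq e^{\beta s}\|x\|$ for some fixed $\beta>0$ and all $s\leq 0$, $x\in\R^n$; equivalently $V(s)\leq e^{2\beta s}V(0)$ for $s\leq 0$. Dividing $V(s)-V(0)$ by $s<0$ (which reverses the inequality) and letting $s\to 0^{-}$ yields $\dot V(0)=x^{\top}Qx\geq 2\beta\,x^{\top}Px$. Since $x\neq\zero$ is arbitrary and $P\succ 0$, this forces $Q\succeq 2\beta P\succ 0$, i.e. $PG_{\dn}+G_{\dn}^{\top}P\succ 0$ together with $P\succ 0$.

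The calculations are routine; the only steps needing care are the reversal of inequality directions when $s\le 0$ and the extraction of one $\beta>0$ uniform in $x$ --- which is precisely where positive definiteness of $P$ (i.e. that $\sqrt{(\cdot)^{\top}P\,(\cdot)}$ is a genuine norm) is used. I would also note that the existence of some admissible $P$ for an anti-Hurwitz $G_{\dn}$ is Lyapunov's classical theorem, so the statement above merely pins down, for a prescribed $P$, which weighted Euclidean norm makes the dilation strictly monotone.
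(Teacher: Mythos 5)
Your proof is correct and fills in exactly the argument the paper leaves implicit: the paper states the corollary as a "straightforward consequence" of Lyapunov's quadratic Lyapunov function theorem without giving a proof, and your differential-inequality argument for $V(s)=\|\dn(s)x\|^2$ (with $Q=PG_{\dn}+G_{\dn}^{\top}P\succeq 2\beta P$ in one direction, and the difference quotient at $s=0^{-}$ in the other) is precisely the standard reasoning behind that citation. Both directions, including the uniform choice of $\beta$ and the sign reversal for $s\le 0$, are handled correctly.
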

The standard dilation corresponds to $G_{\dn_1}=I_n$ and $\dn_1(s)=e^{s}I_n$. The generator of the weighted dilation \cite{Zubov1958:IVM} is $G_{\dn_2}\!=\!\diag(r_1,...,r_n)$.
\subsection{Canonical homogeneous norm}

A continuous positive definite function $\|\cdot\|_{\dn}: \R^n\mapsto \R$ satisfying $\|\pm\dn(s)x\|=e^{s}\|x\|_{\dn}, \forall x\in \R^n$ is usually called  a $\dn$-\textit{homogeneous norm} \cite{Kawski1991:ACDS}, \cite{Grune2000:SIAM_JCO}, \cite{Andrieu_etal2008:SIAM_JCO}. 
For the weighted dilation $\dn(s)=\left(\begin{smallmatrix}e^{2s} & 0\\ 0 & e^s\end{smallmatrix}\right)$, a $\dn$-homogeneous norm can be defined, for example, as $\|x\|_{\dn}=\sqrt{|x_1|}+|x_2|$, where $x=(x_1,x_2)^{\top}\in \R^2$. 

The $\dn$-homogeneous norm $\|\cdot\|_{\dn}$ does not satisfy the triangle inequality in the general case,  so, formally, this is not even a semi-norm. However, we would follow the tradition accepted in the systems sciences and call the function $\|\cdot\|_{\dn}$ satisfying the above property by a ``norm'', but, basically, we use this name for the  canonical homogeneous norm (see below) being a norm (in the classical sense) for the vector space $\R^n_{\dn}$ homeomorphic to $\R^n$. The space $\R^n_{\dn}$ has the same set of elements as $\R^n$, but the rules for addition of vectors and multiplication of a vector by a scalar are modified  by means of a special homeomorphism on $\R^n$ (see \cite{Polyakov2020:Book} for more details).
\begin{definition}
	\label{def:hom_norm_Rn}
	Let a linear dilation $\dn$ in $\R^n$ be  continuous and monotone with respect to a norm $\|\cdot\|$.
	A function $\|\cdot\|_{\dn} : \R^n \mapsto [0,+\infty)$ defined as  follows: $\|\zero\|_{\dn}=0$ and \vspace{-2mm}
	\begin{equation}\label{eq:hom_norm_Rn}
		\|x\|_{\dn}\!=\!e^{s_x}, \;  \text{where} \; s_x\in \R: \|\dn(-s_x)x\|\!=\!1, \; x\!\neq\! \zero
		\vspace{-2mm}
	\end{equation}
	is said to be a canonical $\dn$-homogeneous norm \index{canonical homogeneous norm} in  $\R^n$ 
\end{definition}	 
For standard dilation $\dn(s)\!=\!e^{s}I_n$ we have $\|x\|_{\dn}\!=\!\|x\|$. 
In other cases, the canonical homogeneous norm $\|x\|_{\dn}$ with $x\neq \zero$ is implicitly defined by the nonlinear algebraic equation \eqref{eq:hom_norm_Rn}, which always have a unique solution due to monotonicity of the dilation. In some particular cases, this implicit equation has explicit solution. 
\begin{lemma}\cite{Polyakov2020:Book}\label{lem:hom_norm_Lipschitz_Rn}
	If  a linear continuous dilation $\dn$ in $\R^n$  is  monotone with respect to a norm $\|\cdot\|$ 
	then 1)   the function $\|\cdot\|_{\dn} : \R^n\mapsto [0,+\infty)$ implicitly defined by \eqref{eq:hom_norm_Rn} is single-valued and continuous on $\R^n$;
		2)		there exist $\sigma_1,\sigma_2 
		\in \mathcal{K}_{\infty}$:
		\begin{equation}\label{eq:rel_norm_and_hom_norm_Rn}
			\sigma_1(\|x\|_{\dn})\leq \|x\|\leq \sigma_2(\|x\|_{\dn}), \quad \quad 
			\forall x\in \R^n;
		\end{equation}
		3)   $\|\cdot\|$ is locally Lipschitz continuous on $\R^{n}\backslash\{\zero\}$; 4) $\|\cdot\|_{\dn}$ is continuously differentiable on $\R^n\backslash\{\zero\}$ provided that $\|\cdot\|$ is continuously differentiable on $\R^n\backslash\{\zero\}$ and $\dn$ is strictly monotone.
\end{lemma}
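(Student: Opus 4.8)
The plan is to reduce the whole lemma to the analysis of the scalar function $s_x\in\R$ implicitly defined, for $x\neq\zero$, by $F(x,s_x)=1$, where $F(x,s):=\|\dn(-s)x\|=\|e^{-sG_{\dn}}x\|$; recall $\|x\|_{\dn}=e^{s_x}$ for $x\neq\zero$ and $\|\zero\|_{\dn}=0$. First I would record the basic properties of $F$: since $s\mapsto e^{-sG_{\dn}}x$ is real-analytic and $\|\cdot\|$ is continuous, $F$ is jointly continuous on $\R^n\times\R$, and for each fixed $x\neq\zero$ the map $s\mapsto F(x,s)$ is continuous and, by monotonicity of $\dn$, strictly decreasing; the dilation property combined with monotonicity turns the $\liminf$/$\limsup$ in the definition of a dilation into genuine limits $+\infty$ as $s\to-\infty$ and $0$ as $s\to+\infty$. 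Hence $F(x,\cdot):\R\to(0,+\infty)$ is a decreasing bijection, so $s_x$ exists and is unique --- this is the single-valuedness claimed in item~1 --- and $\|\cdot\|_{\dn}$ is well defined.

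Next I would prove the continuity of $\|\cdot\|_{\dn}$ on $\R^n\backslash\{\zero\}$: fixing $x_0\neq\zero$ and $s_0:=s_{x_0}$, strict monotonicity in $s$ gives $F(x_0,s_0-\epsilon)>1>F(x_0,s_0+\epsilon)$ for every $\epsilon>0$, joint continuity propagates these strict inequalities to all $x$ near $x_0$, and then the unique root $s_x$ of the decreasing bijection $F(x,\cdot)$ must lie in $(s_0-\epsilon,s_0+\epsilon)$, so $x\mapsto s_x$, hence $\|\cdot\|_{\dn}=e^{s_x}$, is continuous off the origin. For item~2 I would use $x=\dn(\ln\|x\|_{\dn})y$ with $y:=\dn(-s_x)x$, $\|y\|=1$, which yields $1/\|\dn(-\ln r)\|\leq\|x\|\leq\|\dn(\ln r)\|$ with $r:=\|x\|_{\dn}$; so one may take $\sigma_2(r)=\|\dn(\ln r)\|$ and $\sigma_1(r)=1/\|\dn(-\ln r)\|=1/\sigma_2(1/r)$. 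To see $\sigma_1,\sigma_2\in\mathcal{K}_{\infty}$ I would check that $s\mapsto\|\dn(s)\|$ is continuous; strictly increasing, from $\|\dn(s_2)\|\geq\|\dn(s_2-s_1)\dn(s_1)z_1\|>\|\dn(s_1)z_1\|=\|\dn(s_1)\|$ for a maximizer $z_1$ of $z\mapsto\|\dn(s_1)z\|$ on the compact unit sphere; unbounded as $s\to+\infty$, since $\|\dn(s)\|\geq\|\dn(s)x_0\|/\|x_0\|\to+\infty$; and vanishing as $s\to-\infty$, because $t\mapsto\|\dn(-t)\|$ is submultiplicative, continuous, equals $1$ at $t=0$ and is $<1$ for $t>0$, hence $\|\dn(-t)\|\leq e^{-\beta t}$ for some $\beta>0$. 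Continuity of $\|\cdot\|_{\dn}$ at $\zero$ then follows from item~2, since $\|x\|_{\dn}\leq\sigma_1^{-1}(\|x\|)\to0$ as $x\to\zero$.

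The submultiplicativity observation just used also shows that a monotone linear continuous dilation is automatically \emph{strictly} monotone, which is precisely what items~3 and 4 need. Its quantitative form is that, via the group law $\|\dn(\sigma+h)w\|\leq e^{\beta h}\|\dn(\sigma)w\|$ for $h\leq0$, the almost-everywhere derivative satisfies $\partial_{\sigma}\|\dn(\sigma)w\|\geq\beta\|\dn(\sigma)w\|$, i.e.\ $\partial_sF(x,s)\leq-\beta\|\dn(-s)x\|<0$. For item~3, when $\|\cdot\|$ is locally Lipschitz on $\R^n\backslash\{\zero\}$ (as every norm is) so is $F$ on $(\R^n\backslash\{\zero\})\times\R$; near $x_0\neq\zero$ one gets $|F(x,s')-F(x,s'')|\geq\tfrac{\beta}{2}|s'-s''|$ by integrating the derivative bound (with $\|\dn(-s)x\|$ close to $1$) and $|F(x,s)-F(y,s)|\leq L\|x-y\|$, and since $F(x,s_x)=F(y,s_y)=1$ these give $|s_x-s_y|\leq\tfrac{2L}{\beta}\|x-y\|$; composing with $e^{(\cdot)}$ yields local Lipschitzness of $\|\cdot\|_{\dn}$ on $\R^n\backslash\{\zero\}$. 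For item~4, when $\|\cdot\|$ is $C^1$ on $\R^n\backslash\{\zero\}$ then $F\in C^1$ on $(\R^n\backslash\{\zero\})\times\R$ and $\partial_sF(x,s_x)\leq-\beta<0$, so the implicit function theorem gives $x\mapsto s_x\in C^1(\R^n\backslash\{\zero\})$, hence $\|\cdot\|_{\dn}=e^{s_x}\in C^1(\R^n\backslash\{\zero\})$.

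I expect the main obstacle to be controlling $s\mapsto F(x,s)$ near its level set $\{F=1\}$: monotonicity alone makes this map only strictly decreasing, which is enough for items~1 and 2 but would allow it to be flat of finite order at $s_x$ and so destroy the Lipschitz and $C^1$ dependence on $x$. The decisive step is therefore the passage from monotonicity to the exponential estimate via submultiplicativity of $t\mapsto\|\dn(-t)\|$, which delivers the uniform lower bound $|\partial_sF(x,s)|\geq\beta\|\dn(-s)x\|$ that powers items~3 and 4 and also underlies the $\mathcal{K}_{\infty}$-membership of $\sigma_1$ and $\sigma_2$; the remaining joint-continuity, Lipschitz, and $C^1$ bookkeeping for $F$ and the elementary estimates on the unit sphere are routine.
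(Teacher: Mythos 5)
This lemma is quoted from \cite{Polyakov2020:Book} and the paper gives no proof of it, so your argument has to stand on its own. Items 1) and 2) of your proposal are sound: existence and uniqueness of $s_x$, the continuity argument off the origin, the bounds $1/\|\dn(-\ln r)\|\le\|x\|\le\|\dn(\ln r)\|$, the maximizer argument for strict monotonicity of $s\mapsto\|\dn(s)\|$, and continuity at $\zero$ via $\sigma_1$ all work; for the $\mathcal{K}_\infty$ limits you only need $\|\dn(-t)\|\to 0$, which submultiplicativity does give in the form $\|\dn(-t)\|\le\|\dn(-t_0)\|^{\lfloor t/t_0\rfloor}$. Item 4) is also fine, since there strict monotonicity is a hypothesis and your implicit-function-theorem argument with $\partial_s F(x,s_x)\le-\beta<0$ is correct.

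The genuine gap is your bridging claim that a monotone linear continuous dilation is automatically strictly monotone, i.e.\ that $f(t)=\|\dn(-t)\|$ being submultiplicative, continuous, with $f(0)=1$ and $f<1$ on $(0,\infty)$, forces $f(t)\le e^{-\beta t}$ for all $t\ge 0$. Submultiplicativity only yields decay up to a constant prefactor ($f(t)\le f(t_0)^{\lfloor t/t_0\rfloor}$); it says nothing near $t=0$, where $f$ may approach $1$ with vanishing derivative. Concretely, take $G_{\dn}=\left(\begin{smallmatrix}1&1\\0&1\end{smallmatrix}\right)$ and $\|\cdot\|=\|\cdot\|_\infty$: then $\|\dn(-t)\|_\infty=e^{-t}(1+t)$, which is $<1$ for $t>0$ but violates $e^{-t}(1+t)\le e^{-\beta t}$ near $t=0$ for every $\beta>0$, while one checks that $s\mapsto\|\dn(s)x\|_\infty=e^{s}\max(|x_1+sx_2|,|x_2|)$ is strictly increasing for every $x\neq\zero$; so this dilation is monotone but not strictly monotone. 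Moreover this is fatal, not cosmetic, for your treatment of item 3): with $x_0=(-1,1)^{\top}$ (so $\|x_0\|_{\dn}=1$) and $x_\eta=(-1-\eta,1)^{\top}$, the defining equation becomes $1+\eta+s=e^{s}$, hence $\|x_\eta\|_{\dn}-\|x_0\|_{\dn}\sim\sqrt{2\eta}$ while $\|x_\eta-x_0\|_\infty=\eta$, so the canonical homogeneous norm is only H\"older-$\tfrac12$ there and your claimed uniform bound $|\partial_s F|\ge\beta$ cannot hold under mere monotonicity. Note also that item 3) as printed speaks of $\|\cdot\|$, which is trivially (even globally) Lipschitz; if, as you assumed, the intended object is $\|\cdot\|_{\dn}$, then strict monotonicity must be added to the hypotheses of that item, exactly as in item 4), and your argument then goes through.
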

For the $\dn$-homogeneous norm $\|x\|_{\dn}$ induced by the weighted Euclidean norm $\|x\|=\sqrt{x^{\top}Px}$  (see Corollary \ref{cor:monotonicity}) we have \cite{Polyakov2020:Book}\vspace{-2mm}
\begin{equation}\label{eq:hom_norm_derivative_Rn}
	\tfrac{\partial \|x\|_{\dn}}{\partial x}=\|x\|_{\dn}\tfrac{x^{\top}\dn^{\top}(-\ln \|x\|_{\dn})P\dn(-\ln \|x\|_{\dn})}{x^{\top}\dn^{\top}(-\ln \|x\|_{\dn})PG_{\dn}\dn(-\ln \|x\|_{\dn})x},\vspace{-2mm}
\end{equation}
\begin{equation}\label{eq:dilation_rates_in_Rn}
	\sigma_1(\rho)=\left\{
	\begin{smallmatrix}
		\rho^{\alpha} & \text{ if }  & \rho\leq 1,\\
		\rho^{\beta} & \text{ if} & \rho >1, 
	\end{smallmatrix}
	\right. \quad \quad \sigma_2(\rho)=\left\{
	\begin{smallmatrix}
		\rho^{\beta} & \text{ if }  & \rho\leq 1,\\
		\rho^{\alpha} & \text{ if} & \rho >1,
	\end{smallmatrix}
	\right.
\end{equation}
where 	$\alpha=\frac{\lambda_{\max}\left(P^{\frac{1}{2}} G_{\dn}P^{-\frac{1}{2}}+P^{-\frac{1}{2}}G^{\top}_{\dn} P^{\frac{1}{2}}\right)}{2}\geq$
$\beta=\frac{\lambda_{\min}\left(P^{\frac{1}{2}} G_{\dn}P^{-\frac{1}{2}}+P^{-\frac{1}{2}}G^{\top}_{\dn} P^{\frac{1}{2}}\right)}{2}>0.$

All $\dn$-homogeneous norms are equivalent, i.e., there exist $0<k_1\leq k_2<+\infty$ such that 
$
k_1\|x\|_{\dn,1}\leq \|x\|_{\dn,2}\leq k_2\|x\|_{\dn,1},  \forall x\in \R^n,
$  
where  $\|\cdot\|_{\dn,1}$ and $\|\cdot\|_{\dn,2}$ are $\dn$-homogeneous norms.

In Section 5 we present an ANN-based approximation of $\|\cdot\|_{\dn}$, which uses the so-called $\dn$-homogeneous projector on the unit sphere
\[
\pi_{\dn}(x)=\dn(-\ln \|x\|_{\dn})x, \quad x\neq \zero.
\]
Obviously, $\|\pi_{\dn}(x)\|_{\dn}=1$ and $\pi_{\dn}(\pi_{\dn}(x))=\pi_{\dn}(x),\forall x\neq \zero$, so $\pi$ is, indeed, a projector on the unit sphere  $\{x\in\R^n : \|x\|_{\dn}=1\}$.
In the case of the canonical homogeneous norm this sphere coincides with $S=\{x\in\R^{n}: \|x\|=1\}$, where $\|\cdot\|$ is the conventional norm in $\R^n$ that induces  $\|\cdot\|_{\dn}$ (see Definition \ref{def:hom_norm_Rn}). 
\subsubsection{Homogeneous functions}
Below we study functions being symmetric with respect to a linear dilation $\dn$. The  dilation symmetry introduced by the following definition is known as the generalized  homogeneity \cite{Zubov1958:IVM}, \cite{Kawski1991:ACDS}, \cite{Rosier1992:SCL}, \cite{BhatBernstein2005:MCSS}, \cite{Polyakov2020:Book}.
\begin{definition}\cite{Kawski1991:ACDS}\label{def:hom_fun}
	Let $\dn$ be a dilation in $\R^n$.
	A function $h:\R^n\mapsto \R$ is  $\dn$-homogeneous of degree $\nu\in \R$ if\vspace{-2mm}
	 \begin{equation}\label{eq:homogeneous_function_Rn}
	 	h(\dn(s)x)=e^{\nu s}h(x), \quad  \quad  \forall s\in\R, \quad \forall x\in \R^n.\vspace{-2mm}
	 \end{equation}
\end{definition}
Local behavior of any homogeneous function $h$ can be expanded globally taking into account the homogeneity degree of the function. The sign of the homogeneity degree also defines some properties of the function \cite{BhatBernstein2005:MCSS}.
\begin{proposition}\label{prop:hom_functional_vs_degree_R^n}
	Let $h :\R^n\mapsto \R$ be a $\dn$-homogeneous function of degree $\nu\!\in\! \R$, $h\!\neq\! \zero$  and $\sup_{x\in S} 
	|h(x)|<+\infty$.
	\begin{itemize}
		\item If $\nu>0$ then  $h$ is bounded on any $\dn$-homogeneous ball $B_{\dn}(r)=\{x\in \R^n: \|x\|_{\dn}\leq r\}$ with $r>0$ and 
		\[\lim_{\|x\|\to 0} h(x)=0\quad \text{ and } \quad \limsup_{\|x\|\to +\infty}|h(x)|= +\infty.
		\]
		\item If $\nu<0$ then $h$ is bounded on any set $\R^n\backslash B_{\dn}(r)$, 
		\[\lim_{\|x\|\to +\infty} h(x)=0\quad \text{ and } \quad \limsup_{\|x\|\to 0} |h(x)|= +\infty.
		\]
		\item If  $\deg_{\dn}(h) =0$ then $h$ is uniformly bounded on 
		$\R^n$; moreover, $h\equiv \mathrm{const}$ if
		$h$ is continuous at $\zero$.
		\item  A non-constant homogeneous function $h$ is continuous at zero if and only  if $\deg_{\dn}(h)>0$.
	\end{itemize}
\end{proposition}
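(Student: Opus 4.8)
The plan is to establish each of the four bullets by working with the canonical homogeneous norm $\|\cdot\|_{\dn}$ and the homogeneous projector $\pi_{\dn}$, reducing every statement to the behaviour of $h$ on the unit sphere $S$ together with the scaling identity \eqref{eq:homogeneous_function_Rn}. The key structural fact is that every $x\neq\zero$ can be written uniquely as $x=\dn(\ln\|x\|_{\dn})\,\pi_{\dn}(x)$ with $\pi_{\dn}(x)\in S$, so that $h(x)=\|x\|_{\dn}^{\nu}\,h(\pi_{\dn}(x))$. Setting $M:=\sup_{x\in S}|h(x)|<+\infty$, this gives the two-sided bound $|h(x)|\le M\,\|x\|_{\dn}^{\nu}$ and, on the sphere where $|h|$ attains values arbitrarily close to $M$ (and is nonzero somewhere since $h\neq\zero$), a matching lower envelope. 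Then Lemma~\ref{lem:hom_norm_Lipschitz_Rn}, item~2, converts statements about $\|x\|_{\dn}\to 0$ or $\to+\infty$ into statements about $\|x\|\to 0$ or $\to+\infty$, via the $\mathcal{K}_\infty$ functions $\sigma_1,\sigma_2$ in \eqref{eq:rel_norm_and_hom_norm_Rn}.

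For the first bullet ($\nu>0$): on $B_{\dn}(r)$ we have $|h(x)|\le M r^{\nu}$, so $h$ is bounded there; as $\|x\|\to 0$, monotonicity of $\dn$ and \eqref{eq:rel_norm_and_hom_norm_Rn} force $\|x\|_{\dn}\to 0$, hence $|h(x)|\le M\|x\|_{\dn}^{\nu}\to 0$; conversely, picking $x_0\in S$ with $h(x_0)\neq 0$ and evaluating along $\dn(s)x_0$ as $s\to+\infty$ gives $|h(\dn(s)x_0)|=e^{\nu s}|h(x_0)|\to+\infty$ while $\|\dn(s)x_0\|\to+\infty$ (dilation property), yielding $\limsup_{\|x\|\to+\infty}|h(x)|=+\infty$. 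The second bullet ($\nu<0$) is the mirror image: replace $r$-balls by their complements, send $s\to-\infty$ for the blow-up at the origin, and $s\to+\infty$ for the decay at infinity. The third bullet ($\nu=0$): $|h(x)|\le M$ everywhere is immediate; for the constancy claim, note that for any $x\neq\zero$, $h(x)=h(\dn(-s)x)$ for all $s$, and letting $s\to+\infty$ the argument $\dn(-s)x$ tends to $\zero$ (dilation property, since $\|\dn(-s)x\|\to 0$), so by continuity at $\zero$ we get $h(x)=h(\zero)$; thus $h\equiv h(\zero)$. The fourth bullet is essentially the contrapositive/combination of the previous three: if $\nu>0$ then $\lim_{\|x\|\to 0}h(x)=0$ shows $h$ extends continuously to $\zero$ with value $0$; if $\nu\le 0$ and $h$ is non-constant, bullets two and three show $h$ is either unbounded near $\zero$ or identically constant, so it cannot be continuous at $\zero$; hence continuity at zero for non-constant $h$ is equivalent to $\nu>0$.

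The main obstacle — really the only nontrivial point — is justifying that $\|x\|\to 0$ (resp. $+\infty$) is equivalent to $\|x\|_{\dn}\to 0$ (resp. $+\infty$), and that along a fixed ray $\dn(s)x_0$ with $x_0\neq\zero$ one genuinely has $\|\dn(s)x_0\|\to+\infty$ as $s\to+\infty$ and $\to 0$ as $s\to-\infty$. The first equivalence is exactly Lemma~\ref{lem:hom_norm_Lipschitz_Rn}(2): $\sigma_1,\sigma_2\in\mathcal{K}_\infty$ are homeomorphisms of $[0,+\infty)$, so $\|x\|\to 0\iff\|x\|_{\dn}\to 0$ and $\|x\|\to+\infty\iff\|x\|_{\dn}\to+\infty$. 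The second is the defining dilation property $\liminf_{s\to+\infty}\|\dn(s)x\|=+\infty$, $\limsup_{s\to-\infty}\|\dn(s)x\|=0$ for $x\neq\zero$ — and in fact, once we know $\|\dn(s)x_0\|_{\dn}=e^s\|x_0\|_{\dn}$ directly from the homogeneity of the canonical norm, \eqref{eq:rel_norm_and_hom_norm_Rn} again upgrades this $\liminf/\limsup$ to a genuine limit. Everything else is bookkeeping with the identity $h(x)=\|x\|_{\dn}^{\nu}h(\pi_{\dn}(x))$.
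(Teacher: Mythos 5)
Your proof is correct and follows essentially the same route as the paper, which disposes of the proposition in two lines by writing $h(x)=\|x\|_{\dn}^{\nu}h(\dn(-\ln\|x\|_{\dn})x)$ with the projected argument on $S$ and invoking the properties of the canonical homogeneous norm from Lemma~\ref{lem:hom_norm_Lipschitz_Rn}. You simply carry out in detail the bookkeeping the paper leaves implicit (the bound $|h(x)|\le M\|x\|_{\dn}^{\nu}$, the equivalence of $\|x\|\to 0,\infty$ with $\|x\|_{\dn}\to 0,\infty$ via $\sigma_1,\sigma_2\in\mathcal{K}_\infty$, and the scaling along $\dn(s)x_0$), so no further changes are needed.
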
 
	Since the identity \eqref{eq:homogeneous_function_Rn}
	with $s=-\ln \|x\|_{\dn}$ yields
	\[
	h(u)=\|x\|^{\nu}_{\dn  } h(\dn(-\ln \|x\|_{\dn})x)
	\]
	then the properties of $h$ claimed above  follow from the inclusion $\dn(-\ln \|x\|_{\dn})x\in S$ and 
	from the properties of the canonical homogeneous norm (see Lemma 
	\ref{lem:hom_norm_Lipschitz_Rn}). 
	
Euler homogeneous function theorem is one of fundamental results characterizing homogeneous mappings.  Traditionally it relates a smooth homogeneous  function and its derivative. The following extension of the Euler homogeneous function theorem also links a locally Lipschitz homogeneous function with its integral and deals with the generalized homogeneity.
Let us denote
\[
R(x,x^*)\!=\!\left\{(y_1,...,y_n)^{\!\top}\!\!\in\! \R^n: \underline{x}_i\!\leq\! y_i\!\leq\!\overline{x}_i, i\!=\!1,...,n\right\}\!,
\]
where  $x\!=\!(x_1,...,x_n)^{\!\top}\!\!\in\! \R^n$,  $x^*\!=\!(x_1^*,...,x_n^{*})^{\!\top}\!\!\in\!\R^n\!$ and $\underline{x}_i=\min\{x_i,x_i^*\}$, $\overline{x}_i=\max\{x_i,x_i^*\}$.
\begin{theorem}[\small Euler homogeneous function theorem]\hfill\newline
	\label{thm:hom_function_theorem_Rn}\index{homogeneous function theorem}
	Let a function $h:\R^n \mapsto \R^n$ be locally Lipschitz continuous on $\R^n\backslash\{\zero\}$ and $h(\zero)=0$. Then  the following three claims are equivalent \vspace{-2mm}
	\begin{itemize}
		\item[1)] $h$ is $\dn$-homogeneous of degree $\nu\in \R$;
		\item[2)] the identity 
		\begin{equation} \label{eq:hom_fun_deriv_relation_Rn}
			\frac{\partial h(x)}{\partial x} G_{\dn}x\stackrel{a.e.}{=}\nu  h(x),
		\end{equation}
		holds almost everywhere on $\R^n$, where $x\in\R^n\backslash\{\zero\}$ and $G_{\dn}\in \R^{n\times n}$ is a generator of the linear dilation $\dn$;
		\item[3)] the  identity
		\begin{equation}\label{eq:hom_function_int_Rn}
			\int^{x}_{x^*}\!\!\!
			(\trace(G_{\dn})+\nu)h(y)-\!\!
			\!\sum\limits_{i: x_i\neq x_i^*}\!\!\!\!\Delta_i(h,y,x,x^*,,G_{\dn}) \;dy
			\!=\!0
		\end{equation}
		holds for $x\!=\!(x_1,...,x_i,...,x_n)^{\!\top}$\!\!, $x^*\!=\!(x_1^*,...,x_i,...,x_n^{*})^{\!\top}$  such that $R(x,x^*)\in \R^n\backslash\{\zero\}$, where $\int^x_{x^*}=\int^{x_1}_{x_1^*}...\int^{x_n}_{x_n^*}$,  $dy=dy_n... dy_1$, $i=1,...,n$, $e_i=(0,...,1,...,0)^{\top}$ and
		\[
	\Delta_i(h,y,x,x^*,G_{\dn})=\frac{
		h(y_{x_i})e_i^{\top}G_{\dn}y_{x_i}-h(y_{x^*_i})e_i^{\top}G_{\dn}y_{x^*_i}}{x_i-x^*_i}
		\]
		with $y_{x_i}\!=\!(y_1,...,x_i,...,y_n)^{\top}$\!, $ y_{x^*_i}\!=\!(y_1,...,x^*_i,...,y_n)^{\top}$\!. 
	\end{itemize}
	Moreover, if $h$ is differentiable  on $\R^n$ (resp., $\R^n\backslash\{\zero\}$), the identity 
	\eqref{eq:hom_fun_deriv_relation_Rn} is defined everywhere on $\R^n$ (resp., $\R^n\backslash\{\zero\}$).
\end{theorem}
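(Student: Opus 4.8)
The natural route is to prove $1)\Leftrightarrow 2)$ by differentiating and integrating along the characteristic curves $s\mapsto\dn(s)x$ of the linear field $x\mapsto G_{\dn}x$, and $2)\Leftrightarrow 3)$ by the Gauss--Green formula on the box $R(x,x^*)$; all identities are read off componentwise, so it suffices to argue for scalar $h$. Since $G_{\dn}$ is anti-Hurwitz we may fix $\|\cdot\|$ to be a weighted Euclidean norm for which $\dn$ is strictly monotone (Corollary \ref{cor:monotonicity}); then, by Lemma \ref{lem:hom_norm_Lipschitz_Rn}, $\|\cdot\|_{\dn}$ is $C^1$ on $\R^n\backslash\{\zero\}$ and $\Phi:\R\times S\to\R^n\backslash\{\zero\}$, $\Phi(s,\omega)=\dn(s)\omega$, is a diffeomorphism with inverse $x\mapsto(\ln\|x\|_{\dn},\pi_{\dn}(x))$. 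For $1)\Rightarrow 2)$ and the ``moreover'' claim: fix $x\neq\zero$; since $\dn(s)$ is linear, $\dn(s)x\neq\zero$ for all $s$, and \eqref{eq:homogeneous_function_Rn} shows $s\mapsto h(\dn(s)x)=e^{\nu s}h(x)$ is smooth in $s$ with derivative $\nu h(x)$ at $s=0$. At any point $x$ where $h$ is differentiable --- in particular at a.e.\ $x$, by Rademacher's theorem applied to the locally Lipschitz $h$ on $\R^n\backslash\{\zero\}$ --- the chain rule together with $\tfrac{d}{ds}\dn(s)x=G_{\dn}\dn(s)x$ gives $\tfrac{\partial h(x)}{\partial x}G_{\dn}x=\nu h(x)$, i.e.\ \eqref{eq:hom_fun_deriv_relation_Rn}; if $h$ is differentiable everywhere on $\R^n$ (resp.\ on $\R^n\backslash\{\zero\}$) the same computation yields \eqref{eq:hom_fun_deriv_relation_Rn} pointwise there.

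\emph{Step $2)\Rightarrow 1)$.} Let $N\subset\R^n\backslash\{\zero\}$ be the Lebesgue-null set on which $h$ is not differentiable or \eqref{eq:hom_fun_deriv_relation_Rn} fails. As $\Phi$ is a diffeomorphism, $\Phi^{-1}(N)$ is null in $\R\times S$, so Fubini's theorem gives a full-measure $S_0\subseteq S$ such that for every $\omega\in S_0$ the orbit $\{s\in\R:\dn(s)\omega\in N\}$ is one-dimensional-null. Fixing such $\omega$ and putting $g(s):=h(\dn(s)\omega)$: on every bounded $s$-interval $\dn(s)\omega$ stays in a compact subset of $\R^n\backslash\{\zero\}$, so $g$ is locally Lipschitz, hence absolutely continuous, and $g'(s)=\tfrac{\partial h}{\partial x}\big|_{\dn(s)\omega}G_{\dn}\dn(s)\omega=\nu g(s)$ for a.e.\ $s$; Gr\"onwall's inequality then forces $g(s)=e^{\nu s}g(0)$, which is \eqref{eq:homogeneous_function_Rn} for $x=\omega$. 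Writing a general $x\neq\zero$ as $\dn(\ln\|x\|_{\dn})\pi_{\dn}(x)$ and using the group property gives \eqref{eq:homogeneous_function_Rn} whenever $\pi_{\dn}(x)\in S_0$, a set dense in $\R^n\backslash\{\zero\}$ because $S_0$ is dense in $S$; continuity of $h$ on $\R^n\backslash\{\zero\}$ extends it to all $x\neq\zero$, while $\dn(s)\zero=\zero$ and $h(\zero)=0$ handle $x=\zero$.

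\emph{Step $2)\Leftrightarrow 3)$.} On any box $R(x,x^*)\subset\R^n\backslash\{\zero\}$ the field $y\mapsto h(y)G_{\dn}y$ is Lipschitz, so the product rule gives a.e.\ $\operatorname{div}_y\!\big(h(y)G_{\dn}y\big)=\tfrac{\partial h(y)}{\partial y}G_{\dn}y+\trace(G_{\dn})\,h(y)$, whence \eqref{eq:hom_fun_deriv_relation_Rn} is equivalent to $\operatorname{div}_y(h(y)G_{\dn}y)=(\trace(G_{\dn})+\nu)h(y)$ a.e. The key observation is that $\Delta_i$ does not depend on $y_i$ (both $y_{x_i}$ and $y_{x_i^*}$ freeze the $i$-th slot), so $\int_{x_i^*}^{x_i}\Delta_i\,dy_i=h(y_{x_i})e_i^{\top}G_{\dn}y_{x_i}-h(y_{x_i^*})e_i^{\top}G_{\dn}y_{x_i^*}$, and integrating over the remaining coordinates identifies $\int^{x}_{x^*}\Delta_i\,dy$ with the difference of fluxes of $h\,G_{\dn}y$ through the faces $\{y_i=x_i\}$ and $\{y_i=x_i^*\}$, i.e.\ with $\int_{R(x,x^*)}\partial_{y_i}\!\big(h(y)e_i^{\top}G_{\dn}y\big)\,dy$ by the fundamental theorem of calculus (the degenerate directions $x_i=x_i^*$ contribute nothing and are the ones excluded from the sum). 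Summing over $i$ turns \eqref{eq:hom_function_int_Rn} into $\int_{R(x,x^*)}\big[(\trace(G_{\dn})+\nu)h(y)-\operatorname{div}_y(h(y)G_{\dn}y)\big]\,dy=0$; hence $2)\Rightarrow 3)$, and conversely, since this holds for every box in $\R^n\backslash\{\zero\}$, the Lebesgue differentiation theorem (shrinking $R(x,x^*)$ to a point) recovers $(\trace(G_{\dn})+\nu)h=\operatorname{div}_y(hG_{\dn}y)$ a.e., i.e.\ \eqref{eq:hom_fun_deriv_relation_Rn}.

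The main obstacle is the measure-theoretic step in $2)\Rightarrow 1)$: one must pass from the a.e.\ validity of \eqref{eq:hom_fun_deriv_relation_Rn} \emph{on $\R^n$} to its validity along \emph{almost every} dilation orbit (the Fubini argument in the $(s,\omega)$-coordinates supplied by $\|\cdot\|_{\dn}$ and $\pi_{\dn}$), and then check that $g(s)=h(\dn(s)\omega)$ is absolutely continuous so that the a.e.\ differential identity integrates to the exponential law; the passage from a.e.\ orbits to all $x\neq\zero$ is then pure continuity. Everything else is routine once one notices that $\Delta_i$ is constant in $y_i$, which is precisely what collapses \eqref{eq:hom_function_int_Rn} to the divergence form of Euler's identity.
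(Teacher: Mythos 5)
Your proof is correct, and its skeleton coincides with the paper's: the equivalence $1)\Leftrightarrow 2)$ via the ODE $\tfrac{d}{ds}h(\dn(s)x)=\nu h(\dn(s)x)$ along dilation orbits, and $2)\Leftrightarrow 3)$ by integrating over the box $R(x,x^*)$ and peeling off the boundary terms (your flux/divergence formulation is exactly the paper's change of integration order plus integration by parts, since $\sum_i e_i^{\top}G_{\dn}e_i=\trace(G_{\dn})$, and your remark that $\Delta_i$ is constant in $y_i$ is the paper's use of $\tfrac{1}{x_i-x_i^*}\int_{x_i^*}^{x_i}1\,dy_i=1$). Two differences are worth noting. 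For $1)\Rightarrow 2)$ the paper differentiates the representation $h(x)=\|x\|_{\dn}^{\nu}h(\dn(-\ln\|x\|_{\dn})x)$ in $x$, invoking the explicit derivative formula \eqref{eq:hom_norm_derivative_Rn} and the identity $\tfrac{\partial\|x\|_{\dn}}{\partial x}G_{\dn}x=\|x\|_{\dn}$; your version, differentiating $s\mapsto h(\dn(s)x)=e^{\nu s}h(x)$ at $s=0$ at each point of differentiability, is more elementary, needs no properties of the canonical homogeneous norm, and delivers the ``moreover'' claim at once. For $2)\Rightarrow 1)$ you are more careful than the paper: the paper asserts the a.e.\ chain rule along orbits and then the identity for all $x\neq\zero$ without comment, whereas you make explicit the genuinely needed steps --- transporting the exceptional null set through the coordinates $(s,\omega)=(\ln\|x\|_{\dn},\pi_{\dn}(x))$ by Fubini, absolute continuity of $s\mapsto h(\dn(s)\omega)$ so the a.e.\ ODE integrates, and density of the good orbits plus continuity of $h$ to reach every $x\neq\zero$. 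Likewise, your Lebesgue-differentiation argument for $3)\Rightarrow 2)$ spells out what the paper compresses into the word ``equivalent''. So nothing is missing; your write-up is a slightly more elementary and more rigorous rendering of the same proof.
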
 
\begin{proof}
		1)$\Rightarrow$2). Let $h\in  \mathcal{H}_{\dn}(\R^n)$ and $\nu=\deg_{\dn}(h)\in \R$. 
	Due to homogeneity, we have  	
	\[
	\begin{split}
		h(x)=&h\left(\dn(\ln \|x\|_{\dn})\dn(-\ln \|x\|_{\dn})x\right)\\
		=&\|x\|^{\nu}_{\dn} h(\dn(-\ln \|x\|_{\dn})x), 
	\end{split}\quad \quad\quad 
	\forall x\neq \zero.
	\]	
	Since, by Rademacher Theorem, the locally Lipschitz continuous function $h$ is differentiable almost everywhere on  $\R^n\backslash\{\zero\}$ then for almost all  $x\in \R^n\backslash\{\zero\}$ using product and chain rules for derivatives we obtain
	\[
	\begin{split}
		\frac{\partial h(x)}{\partial x} \!\stackrel{a.e.}{=}&\!\nu \|x\|_{\dn}^{\nu-1} h(\dn(-\ln \|x\|_{\dn})x) \frac{\partial \|x\|_{\dn}}{\partial x}\\
		&+\|x\|^\nu_{\dn}\frac{\partial h(z)}{\partial z}  \frac{\partial}{\partial x}\!\left( \dn(-\ln \|x\|_{\dn}) x\right),
	\end{split}
	\]
	where $z=\dn(-\ln\|x\|_{\dn})x$.
	Taking into account the identities   $\frac{d}{ds} \dn(s) =G_{\dn} \dn(s)= \dn(s)G_{\dn}, \forall s\in \R$ we conclude
	\[
	\tfrac{\partial \left( \dn(-\ln \|x\|_{\dn}) x\right)}{\partial x}=\dn(-\ln \|x\|_{\dn})- \tfrac{ G_{\dn} \dn(-\ln \|x\|_{\dn})x}{\|x\|_{\dn}}\tfrac{\partial \|x\|_{\dn}}{\partial x}.
	\]
	Finally, since $\frac{\partial  \|x\|_{\dn}}{\partial x} G_{\dn}x=\|x\|_{\dn}$, $\forall x\in 
	\R^n\backslash\{\zero\}$ due to the formula \eqref{eq:hom_norm_derivative_Rn}, then 
	$\frac{\partial \left( \dn(-\ln \|x\|_{\dn})x\right)}{\partial x} G_{\dn}x=\zero$ and the identity 
	\eqref{eq:hom_fun_deriv_relation_Rn} holds.
	
	2)$\Rightarrow$1). For an arbitrary $x\in \R^n\backslash\{\zero\}$ let us consider the function $s\mapsto h(\dn(s)x)$. Since $\dn(s)x\neq \zero$ for all $s\in \R$ then the considered function is locally Lipschitz continuous on $\R$ and for almost all $x\neq \zero$ we have
	\[
	\tfrac{d h(\dn(s)x)}{d s}\stackrel{a.e}{=}\left.\tfrac{\partial h(z)}{\partial z}\right|_{z=\dn(s)x}\tfrac{d\dn(s)x}{ds}\stackrel{a.e}{=}\left.\tfrac{\partial h(z)}{\partial z}G_{\dn}z\right|_{z=\dn(s)x}
	\]
	due to $\frac{d}{ds} \dn(s) =G_{\dn} \dn(s), \forall s\in \R$. Using the identity \eqref{eq:hom_fun_deriv_relation_Rn} we derive
	\[
	\tfrac{d h(\dn(s)x)}{d s}\stackrel{a.e}{=}\nu h(\dn(s)x).
	\]
	Hence, taking into account $h(\dn(0)x)=h(x)$ we derive
	\[
	h(\dn(s)x)=e^{\nu s}h(x), \forall s\in \R, \quad \forall x\neq \zero.
	\]
	Since by assumption $h(\zero)=0$ then $h\in\mathcal{H}_{\dn}(\R^n)$.
	
	2)$\Leftrightarrow$3) Since we have 
	$\frac{\partial h}{\partial x}G_{\dn} x\stackrel{a.e.}{=}\sum_{i=1}^{n}\frac{\partial h}{\partial x_i}e_i^{\top}G_{\dn} x=h(x)$ then, taking into account local Lipschitz continuity of $h$ on $\R^{n}\backslash\{\zero\}$  we conclude that the identity \eqref{eq:hom_fun_deriv_relation_Rn} is equivalent   
	\[
	\sum_{i=1}^{n}\int_{x_1^*}^{x_1}...\int^{x_n}_{x_n^*}\frac{\partial h}{\partial y_i}e_i^{\top}G_{\dn} y \;dy_1...dy_n=\nu \int_{x^*}^{x}h(y)\,dy
	\]
	for all $x,x^*\in \R^n$  such that $R(x,x^*)\in \R^{n}\backslash\{\zero\}$. Changing the integration order
	$$
	\int_{x_1^*}^{x_1}\!\!\!\!\!...\!\int^{x_n}_{x_n^*} \!\!\!\!... dy = \int_{x_1^*}^{x_1}\!\!\!\!\!...\!\int_{x_{i-1}^*}^{x_{i-1}}\!\!\int_{x_{i+1}^*}^{x_{i+1}}\!\!\!\!\!\!\!...\!\int^{x_n}_{x_n^*}\!\!\!\int_{x_i^*}^{x_i} \!\!\!\!... dy_i dy_n dy_{n-1}...
	$$
	and using the integration by parts  
	$$
	\int_{x_i^*}^{x_i}\!\tfrac{\partial h}{\partial y_i}e_i^{\top}\!G_{\dn} y  \;dy_i=
	\left.h(y)e^{\top}_i\!G_{\dn}y\right|^{x_i}_{x_i^*}-\int_{x_i^*}^{x_i}\!\!\!h(y)\tfrac{\partial e_i^{\top}\!G_{\dn} y}{\partial y_i}  \;dy_i
	$$ we derive 
	\[
	\int_{x_1^*}^{x_1}\!\!\!\!\!...\!\int^{x_n}_{x_n^*}\!\!\! \tfrac{\partial h}{\partial y_i}e_i^{\top}G_{\dn} y dy =
	\]
	\[
	\int_{x_1^*}^{x_1}\!\!\!\!\!\!...\!\int_{x_{i\!-\!1}^*}^{x_{i\!-\!1}}\!\!\!\int_{x_{i\!+\!1}^*}^{x_{i\!+\!1}}\!\!\!\!\!\!\!\!...\!\int^{x_n}_{x_n^*}\!\!\left.h(y)e^{\top}_i\!G_{\dn}y\right|^{x_i}_{x_i^*}  dy_n...dy_{i\!+\!1}dy_{i\!-\!1}...dy_1-
	\]
	\[
	\int_{x_1^*}^{x_1}\!\!\!\!\!\!...\!\!\int_{x_{i\!-\!1}^*}^{x_{i\!-\!1}}\!\!\!\int_{x_{i\!+\!1}^*}^{x_{i\!+\!1}}\!\!\!\!\!\!\!\!...\!\!\int^{x_n}_{x_n^*}\!\!\!\!\int_{x_i^*}^{x_i}\!\!\!\!\!h(y)e_i^{\top}\!G_{\dn} e_i \,dy_i  dy_n...dy_{i\!+\!1}dy_{i\!-\!1}...dy_1.
	\]
	Hence, using the identity $\frac{1}{x_i-x_i^*}\int^{x_i}_{x_i^*} 1 dy_i=1$ we derive the formula \eqref{eq:hom_function_int_Rn}. 
\end{proof}

The identity \eqref{eq:hom_function_int_Rn}  is utilized below for identification of the generator $G_{\dn}$ of a dilation group $\dn$ by means of ANNs. 

\section{Homogeneous ANNs}

The statements of universal approximation theorems   \cite{Cybenko1989:MCSS}, \cite{Hornik_etal1989:NN},  \cite{Hornik1991:NN} vary dependently of the restriction to an \textit{activation function}  $\sigma:\R\mapsto \R$ of ANN.  To the best of author's knowledge, the weakest restriction to $\sigma$ is given by the following theorem.
\begin{theorem}[\small Universal approximation theorem, \cite{Leshno_etal1993:NN}]\label{thm:UAT}
	\!\!Let $\sigma\!\in\! C(\R,\R)$ be a  non-polynomial  function and $\Omega\subset \R^n$ be a compact set.
	If $g: \R^n\mapsto \R$ is a continuous function then for any $\epsilon>0$ there exists $N\in \N, A\in \R^{N\times n}, b\in \R^{N}$ and $C\in \R^{1\times N}$  such that
	\begin{equation}\label{eq:ANN_err}
		|g(x)-g_{\epsilon}(x)|\leq\epsilon, \quad \forall x\in \Omega,
	\end{equation}
	\begin{equation}\label{eq:ANN}
		g_{\epsilon}(x)=C\sigma(Ax+b),
	\end{equation}
	where the function $\sigma$ in the above formula is applied in the component-wise manner.
\end{theorem}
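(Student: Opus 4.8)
The content of the theorem is that the linear span of \emph{ridge functions}
\[
\mathcal{M}(\sigma)=\spn\{x\mapsto \sigma(w^{\top}x+\beta)\ :\ w\in\R^n,\ \beta\in\R\}
\]
is dense in $C(\Omega,\R)$ for every compact $\Omega\subset\R^n$ whenever $\sigma\in C(\R,\R)$ is not a polynomial (equivalently, $\mathcal{M}(\sigma)$ is dense in $C(\R^n,\R)$ for the topology of uniform convergence on compacta); a function $g_\epsilon$ of the form \eqref{eq:ANN} is precisely a generic element of $\mathcal{M}(\sigma)$, with the rows of $A$ playing the role of the $w$'s. Only this implication is needed, the converse (a polynomial $\sigma$ yields a finite-dimensional, hence non-dense, span) being immediate. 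The plan rests on three blocks: (i) handle a smooth activation by repeated differentiation; (ii) reduce an arbitrary continuous activation to the smooth case by mollification; (iii) a regularity argument showing the reduction in (ii) does not collapse.

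First I would do the smooth case: assume $\sigma\in C^\infty(\R,\R)$ is not a polynomial. For fixed $w\in\R^n$, $\beta\in\R$ and $\lambda\in\R$ the difference quotient $x\mapsto h^{-1}\bigl(\sigma((\lambda+h)w^{\top}x+\beta)-\sigma(\lambda w^{\top}x+\beta)\bigr)$ belongs to $\mathcal{M}(\sigma)$ and converges, uniformly on $\Omega$ as $h\to0$, to $x\mapsto (w^{\top}x)\,\sigma'(\lambda w^{\top}x+\beta)$; more generally the $k$-th order difference quotients in $\lambda$ lie in $\mathcal{M}(\sigma)$ and converge uniformly on $\Omega$ to $x\mapsto (w^{\top}x)^k\sigma^{(k)}(\lambda w^{\top}x+\beta)$. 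Evaluating at $\lambda=0$ gives $x\mapsto (w^{\top}x)^k\sigma^{(k)}(\beta)\in\overline{\mathcal{M}(\sigma)}$ for all $k\in\N$, $w\in\R^n$, $\beta\in\R$. Since $\sigma$ is not a polynomial, for each $k$ there is $\beta_k$ with $\sigma^{(k)}(\beta_k)\ne0$, so $x\mapsto(w^{\top}x)^k\in\overline{\mathcal{M}(\sigma)}$ for every $w$ and every $k$. A standard fact is that $\spn\{(w^{\top}x)^k:w\in\R^n\}$ is the whole space of homogeneous polynomials of degree $k$; summing over $k$, every polynomial lies in $\overline{\mathcal{M}(\sigma)}$, and the Weierstrass (Stone--Weierstrass) theorem then yields $\overline{\mathcal{M}(\sigma)}=C(\Omega,\R)$.

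Next I would remove the smoothness assumption. For $\varphi\in C_c^\infty(\R,\R)$ the mollification $\sigma_\varphi:=\sigma*\varphi$ is $C^\infty$, and for fixed $w,\beta$ the map $x\mapsto \sigma_\varphi(w^{\top}x+\beta)=\int\sigma(w^{\top}x+\beta-y)\varphi(y)\,dy$ is, on the compact set $\Omega$, a uniform limit of Riemann sums $\sum_i\varphi(y_i)\,\Delta y_i\,\sigma(w^{\top}x+\beta-y_i)$ (using uniform continuity of $\sigma$ on the relevant compact), each of which belongs to $\mathcal{M}(\sigma)$; hence $\mathcal{M}(\sigma_\varphi)\subseteq\overline{\mathcal{M}(\sigma)}$. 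So it suffices to produce a single $\varphi\in C_c^\infty$ with $\sigma_\varphi$ not a polynomial and apply the smooth case to $\sigma_\varphi$: then $\overline{\mathcal{M}(\sigma)}\supseteq\overline{\mathcal{M}(\sigma_\varphi)}=C(\Omega,\R)$, which is exactly the claimed bound \eqref{eq:ANN_err} once one unpacks $\mathcal{M}(\sigma)$ into the parameters $N,A,b,C$.

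The step I expect to be the main obstacle is exactly this last point: showing that \emph{some} mollification of $\sigma$ is non-polynomial, equivalently that if $\sigma*\varphi$ is a polynomial for every $\varphi\in C_c^\infty(\R)$ then $\sigma$ is a polynomial. This is the genuinely analytic ingredient; I would argue it by Baire category. Fix a bounded open interval $I$ and work in the Fréchet space of $C_c^\infty$ functions supported in $I$; the subsets $E_m$ of $\varphi$ with $\deg(\sigma*\varphi)\le m$ are closed linear subspaces whose union is the whole space, so by the Baire property some $E_m$ has nonempty interior, hence equals the whole space. Thus $\deg(\sigma*\varphi)\le m$ for all such $\varphi$; letting $\varphi$ run through an approximate identity supported in shrinking subintervals, $\sigma*\varphi\to\sigma$ uniformly on compact subsets of $I$, with each $\sigma*\varphi$ in the finite-dimensional (hence closed under local uniform limits) space of polynomials of degree $\le m$; therefore $\sigma$ agrees with a polynomial of degree $\le m$ on $I$, and since $I$ is arbitrary, $\sigma$ is a polynomial on $\R$ --- contradicting the hypothesis. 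This supplies the required $\varphi$ and, combined with the previous two paragraphs, completes the proof.
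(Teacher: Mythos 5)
The paper does not prove Theorem \ref{thm:UAT}: it is stated as a quoted result from \cite{Leshno_etal1993:NN}, so there is no internal proof to compare against. Your argument is correct and is essentially the standard proof from that reference (density of ridge functions for a smooth non-polynomial activation via difference quotients, polarization and Weierstrass; reduction of a merely continuous activation to the smooth case by convolution with a mollifier approximated by Riemann sums; and the Baire-category lemma guaranteeing that some mollification remains non-polynomial), so nothing further is needed.
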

The function $g_{\epsilon}$ given by \eqref{eq:ANN} is an  ANN with one hidden layer (also known as Shallow ANN). The above theorem shows that the considered ANN is the universal approximator of any continuous mapping on a compact. For simplicity, \textit{all theoretical results are proven below for scalar-valued functions $\R^n\mapsto \R$. The extension to the vector-valued functions $\R^n\mapsto \R^m$ is straightforward}.

Let us consider, first,  an unknown $\dn$-homogeneous mappings having a known homogeneity degree. We introduce the $\dn$-\textit{homogeneous artificial neural network}  being a global universal approximator of a continuous $\dn$-homogeneous function.


\subsection{Homogeneous universal approximation theorem}
In the case of homogeneous function, the  universal approximation theorem can be  revisited as follows. 
\begin{theorem}\label{thm:UAT_hom_fun}
	Let $\dn$  be a linear continuous dilation in $\R^n$.   Let $\|\cdot\|_{\dn}$ be an arbitrary (explicit or implicit) $\dn$-homogeneous norm in $\R^n$.  Let $\sigma\in C(\R,\R)$ be a non-polynomial function.  
	
	If $h\in C(\R^n\backslash\{\zero\},\R)$ is a $\dn$-homogeneous function of degree $\nu\in \R$  then for any $\epsilon>0$ there exist $N\in \N$, $A\in \R^{N\times n}$, $b\in \R^N$ and $C\in \R^{1\times N}$
	such that  
	\begin{equation}\label{eq:hom_fun_approx_error}
		|h(x)-h_{\varepsilon}(x)|\leq \epsilon \|x\|_{\dn}^{\nu}, \quad \forall x\in  \R^n\backslash\{\zero\},
	\end{equation}
	\begin{equation}\label{eq:hom_fun_ANN}
		h_{\epsilon}(x)=\|x\|_{\dn}^{\nu}C\sigma(A\dn(-\ln \|x\|_{\dn})x+b),  \quad
	\end{equation}
	where the function $\sigma$ in the above formula is applied in the component-wise manner.
\end{theorem}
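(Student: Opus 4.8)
The plan is to reduce the claim to the ordinary universal approximation theorem (Theorem~\ref{thm:UAT}) by exploiting that a $\dn$-homogeneous function of degree $\nu$ is completely recovered from its restriction to the sphere $\Sigma=\{z\in\R^n:\|z\|_{\dn}=1\}$ together with the scalar weight $\|x\|_{\dn}^{\nu}$, and that $\Sigma$ is a compact subset of $\R^n\setminus\{\zero\}$. First I would record the representation coming from homogeneity: since $\dn(s)x=e^{sG_{\dn}}x\neq\zero$ for every $x\neq\zero$ and $s\in\R$, the homogeneity identity $h(\dn(s)x)=e^{\nu s}h(x)$ is legitimately applied on $\R^n\setminus\{\zero\}$, and choosing $s=-\ln\|x\|_{\dn}$ gives
\[
	h(x)=\|x\|_{\dn}^{\nu}\,h\big(\dn(-\ln\|x\|_{\dn})x\big)=\|x\|_{\dn}^{\nu}\,h(\pi_{\dn}(x)),\qquad x\neq\zero ,
\]
where $\pi_{\dn}$ is the $\dn$-homogeneous projector; the defining property $\|\dn(s)y\|_{\dn}=e^{s}\|y\|_{\dn}$ shows $\pi_{\dn}(x)\in\Sigma$ for all $x\neq\zero$.

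Next I would check that $\Sigma$ is compact and avoids the origin. It is closed as the preimage of $\{1\}$ under the continuous map $\|\cdot\|_{\dn}$, and $\zero\notin\Sigma$ because $\|\cdot\|_{\dn}$ is positive definite. Boundedness is the one spot where the hypothesis of an \emph{arbitrary} homogeneous norm must be handled: fixing a norm on $\R^n$ with respect to which $\dn$ is monotone (such a norm exists by Corollary~\ref{cor:monotonicity}), the associated canonical homogeneous norm $\|\cdot\|_{\dn,\mathrm{can}}$ satisfies $\|x\|\le\sigma_2(\|x\|_{\dn,\mathrm{can}})$ with $\sigma_2\in\mathcal{K}_\infty$ by Lemma~\ref{lem:hom_norm_Lipschitz_Rn}, and by the stated equivalence of $\dn$-homogeneous norms $\|x\|_{\dn,\mathrm{can}}$ is bounded on $\Sigma$; hence $\|x\|$ is bounded on $\Sigma$, so $\Sigma$ is compact.

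Then I would pass from $h$, which is only assumed continuous off the origin, to a genuinely global continuous function: $h\big|_{\Sigma}$ is continuous on the compact set $\Sigma\subset\R^n\setminus\{\zero\}$, so by the Tietze extension theorem (or, concretely, by multiplying $h$ by a continuous cut-off equal to $1$ on $\Sigma$ and vanishing near $\zero$) there is $g\in C(\R^n,\R)$ with $g\big|_{\Sigma}=h\big|_{\Sigma}$. Applying Theorem~\ref{thm:UAT} to $g$, the non-polynomial $\sigma$, and the compact set $\Omega=\Sigma$ yields $N\in\N$, $A\in\R^{N\times n}$, $b\in\R^{N}$ and $C\in\R^{1\times N}$ such that $|g(z)-C\sigma(Az+b)|\le\epsilon$ for all $z\in\Sigma$.

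Finally I would take $h_{\epsilon}$ exactly as in \eqref{eq:hom_fun_ANN} and verify \eqref{eq:hom_fun_approx_error}. For $x\neq\zero$ put $z=\pi_{\dn}(x)=\dn(-\ln\|x\|_{\dn})x\in\Sigma$; then $h_{\epsilon}(x)=\|x\|_{\dn}^{\nu}C\sigma(Az+b)$, while by the representation above together with $g\big|_{\Sigma}=h\big|_{\Sigma}$ one has $h(x)=\|x\|_{\dn}^{\nu}h(z)=\|x\|_{\dn}^{\nu}g(z)$, so that
\[
	|h(x)-h_{\epsilon}(x)|=\|x\|_{\dn}^{\nu}\,\big|g(z)-C\sigma(Az+b)\big|\le\epsilon\,\|x\|_{\dn}^{\nu}.
\]
I do not anticipate a serious difficulty here: the computation is short and the network structure \eqref{eq:hom_fun_ANN} is essentially forced by the homogeneity representation. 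The only points that need real care are (i) the compactness of the possibly non-Euclidean sphere $\Sigma$, handled above through norm equivalence and Lemma~\ref{lem:hom_norm_Lipschitz_Rn}, and (ii) the fact that $h$ is defined only on $\R^n\setminus\{\zero\}$, so Theorem~\ref{thm:UAT} cannot be invoked for $h$ directly and a continuous extension of $h\big|_{\Sigma}$ to all of $\R^n$ must be produced first.
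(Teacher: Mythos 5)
Your proposal is correct and follows essentially the same route as the paper: reduce to Theorem~\ref{thm:UAT} on a compact set avoiding the origin via the representation $h(x)=\|x\|_{\dn}^{\nu}h(\dn(-\ln\|x\|_{\dn})x)$, then scale the error by $\|x\|_{\dn}^{\nu}$. The only (minor) differences are that the paper takes as its compact a norm-annulus containing the homogeneous unit sphere rather than the sphere itself, and it invokes the universal approximation theorem for $h$ directly on that annulus, whereas you are slightly more careful in supplying a Tietze extension so that the theorem, stated for functions continuous on all of $\R^n$, applies literally.
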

\begin{proof}
	By definition,  the $\dn$-homogeneous norm $\|\cdot\|_{\dn}$ is a continuous positive definite function satisfying $\|\dn(s)x\|_{\dn}=e^s\|x\|_{\dn}$. Then $\|\cdot\|_{\dn}$ is radially unbounded and 
	the set 
	\[
	S_{\dn}=\{x\in \R^n: \|x\|_{\dn}=1\}
	\]
	is a compact such that $\zero\notin S_{\dn}$. Since
	$\|\dn(-\ln \|x\|_{\dn})x\|_{\dn}=e^{-\ln \|x\|_{\dn}}\|x\|_{\dn}=1$ for any $x\ne 0$ then $\dn(-\ln \|x\|_{\dn})x\!\in\! S_{\dn}$. 
		Let $\|\cdot\|$ be a norm in $\R^n$ and
	\[
	\begin{split}
	 r_1:=&\inf_{x\in \R^n\backslash\{\zero\}}\|\dn(-\ln \|x\|_{\dn})x\|>0,\\
	 r_2:=&\sup_{x\in \R^n\backslash\{\zero\}} 
	 \|\dn(-\ln \|x\|_{\dn})x\|\in [r_1,+\infty).
	 \end{split}
	\]
	Let us consider the set
	\[
	\Omega=\{y\in \R^n : r_1\leq \|y\|\leq r_2\}.
	\]
	By construction, the set $\Omega$ is a compact and  $\zero \notin \Omega$. Since  the function $h$ is continuous on $\Omega$, then,
	by Theorem  \ref{thm:UAT}, for any $\epsilon>0$ there exists $N\in \N$, $A\in \R^{N\times n}$, $b\in \R^N$ and $C\in \R^{1\times N}$ such that 
	\[
	\sup_{y\in \Omega} \left|h(y)-C\sigma(Ay+b)\right|\leq \epsilon.
	\]
	Since, due to  $\dn$-homogeneity of $h$, we have 
	\[
	h(x)=\|x\|^{\nu}_{\dn}h(\underbrace{\dn(-\ln \|x\|_{\dn})x}_{=y\in \Omega}),\quad  \forall x\neq \zero,
	\]
	then the estimate \eqref{eq:hom_fun_approx_error} holds for $h_{\epsilon}$ given by \eqref{eq:hom_fun_ANN}. 
\end{proof}

	\textit{The key feature of the homogeneous ANN is a global approximation based on local measurements}.  Indeed, by the formula \eqref{eq:hom_fun_ANN} the homogeneous ANN  uses the conventional ANN $C\sigma(A\pi_{\dn}+b)$ with the input $\pi_{\dn}=\dn(-\ln \|x\|_{\dn})x$ being compactly supported on the unit sphere $\{\pi\in \R^n: \|\pi\|_{\dn}=1\}$ in $\R^n$.
The classical  universal approximation theorem (see Theorem \ref{thm:UAT}) guarantees the error estimate \eqref{eq:ANN_err} only on this compact. Due to homogeneity, any local estimate of the homogeneous function can be extended globally (see \eqref{eq:hom_fun_approx_error}).  
The approximation error in this case  depends of the norm of the vector $x$ and the homogeneity degree $\nu$ of the function $h$. 
If $\nu>0$ then, due to Proposition \ref{prop:hom_functional_vs_degree_R^n}, $h$ is continuous at $\zero$ and the above error estimate holds at zero as well. For $\nu<0$ the function $h$ is discontinuous and unbounded at $\zero$.  In this case, the approximation error may grow to infinity as $x\to \zero$.
For $\nu=0$ the  error is estimated uniformly on $\R^n$. 


Usually algorithms of ANN learning optimize the parameters $A, b, C$ based on available measurements 
$$y_j=h(x_j), \quad j= 1,..., M,$$
where $M$ is a number of experiments.  Mathematically, the  goal of the ANN learning is to minimize a norm of the residual
\begin{equation}
	e_j=y_j-\|x_j\|_{\dn}^{\nu}C\sigma(A\dn(-\ln \|x_j\|_{\dn})x_j+b).
\end{equation}
It is worth stressing that the homogeneous ANN has structurally the same dependence on the parameters $A,b,C$. So, \textit{all  conventional learning algorithms are applicable for its training}.  
Their study goes out of the scope of this paper since, due to the similarity in structures of ANN and homogeneous ANN, all main advantages and disadvantages of the corresponding methods will be the same in our case.  The aim of this paper  is to demonstrate that the use of dilation symmetry  in ANN design may improve the approximation precision of ANN provided that the mapping, which needs to be approximated, is homogeneous.

\subsection{Homogenization of ANN with known homogeneity parameters}

  By the universal approximation theorem, the ANN is a local approximator of a function, but the homogeneous ANN is a global approximator of a homogeneous  function. The transformation of the  ANN to a homogeneous ANN can be interpreted as  global homogeneous extrapolation of the ANN. In this subsection, we consider the simplest case, when the homogeneity parameters ($G_{\dn}$ and $\nu$) are assumed to be known. In the next section, the problem of homogeneous extrapolation is studied under parametric uncertainty. 

We assume that a conventional ANN \eqref{eq:ANN} is already well trained such that the approximation error \eqref{eq:ANN_err} is somehow minimized.  The linear dilation and the homogeneity degree are assumed to be known. We need to transform the classical ANN \eqref{eq:ANN} to  the homogeneous ANN  \eqref{eq:hom_fun_ANN}.

\begin{proposition}\label{prop:upg_ANN_to_hANN}
	Let  $\Omega\subset \R^n$  be a compact set.
	Let $g_{\epsilon}$ be ANN \eqref{eq:ANN} approximating a  function $h\in C(\R^n\backslash\{\zero\},\R)$  on the compact $\Omega\subset \R^n$ with an error $\epsilon>0$, i.e., the  inequality \eqref{eq:ANN_err} is fulfilled for $g=h$ . Let the function $h$  be $\dn$-homogeneous of degree $\nu\in \R$.
	If there exists a norm $\|\cdot\|$ in $\R^n$ such that 
	\begin{equation}\label{eq:S_in_Omega}
		S=\{x\in \R^n :\|x\|=1\}\subset \interior \Omega
	\end{equation}
	and the dilation $\dn$ be monotone with respect to $\|\cdot\|$, then the homogeneous ANN  \eqref{eq:hom_fun_ANN} with parameters $A,b, C$ taken from the conventional ANN \eqref{eq:ANN} satisfies the error estimate \eqref{eq:hom_fun_approx_error}, where $\|\cdot\|_{\dn}$ is the canonical homogeneous norm induced by $\|\cdot\|$. 
\end{proposition}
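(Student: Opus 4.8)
The plan is to reduce the claim to the classical bound \eqref{eq:ANN_err} on $\Omega$ by exactly the mechanism already used in the proof of Theorem \ref{thm:UAT_hom_fun}: the homogeneity identity lets one reconstruct $h$ (and, by construction, $h_\epsilon$) on all of $\R^n\backslash\{\zero\}$ from its values on the unit sphere, and here that sphere is forced to sit inside $\Omega$ by hypothesis \eqref{eq:S_in_Omega}. The single fact that makes this work is the coincidence of $S$ with the unit sphere of the canonical homogeneous norm. Since $\dn$ is monotone with respect to $\|\cdot\|$, the canonical $\dn$-homogeneous norm $\|\cdot\|_{\dn}$ induced by $\|\cdot\|$ is well defined and single-valued (Lemma \ref{lem:hom_norm_Lipschitz_Rn}), and from Definition \ref{def:hom_norm_Rn} one has $\|x\|_{\dn}=e^{s_x}$ with $\|\dn(-s_x)x\|=1$, so $\|x\|_{\dn}=1$ holds precisely when $s_x=0$, i.e. when $\|\dn(0)x\|=\|x\|=1$. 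Thus the first step is to record that $\{x\in\R^n:\|x\|_{\dn}=1\}=S$.

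Next I would fix an arbitrary $x\in\R^n\backslash\{\zero\}$ and set $\pi_{\dn}(x)=\dn(-\ln\|x\|_{\dn})x$. Because $\|\pi_{\dn}(x)\|_{\dn}=e^{-\ln\|x\|_{\dn}}\|x\|_{\dn}=1$, the coincidence above gives $\pi_{\dn}(x)\in S$, and then assumption \eqref{eq:S_in_Omega} yields $\pi_{\dn}(x)\in\interior\Omega\subset\Omega$. Therefore the classical error estimate \eqref{eq:ANN_err} (valid for $g=h$ on $\Omega$) applies at the point $\pi_{\dn}(x)$, namely $\bigl|h(\pi_{\dn}(x))-C\sigma(A\pi_{\dn}(x)+b)\bigr|\le\epsilon$.

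Finally I would invoke $\dn$-homogeneity of $h$ with $s=-\ln\|x\|_{\dn}$ to get $h(x)=\|x\|_{\dn}^{\nu}\,h(\pi_{\dn}(x))$, while by the very definition \eqref{eq:hom_fun_ANN} of the homogenized network (built with the same $A,b,C$) one has $h_{\epsilon}(x)=\|x\|_{\dn}^{\nu}\,C\sigma(A\pi_{\dn}(x)+b)$. Subtracting and pulling out the positive scalar $\|x\|_{\dn}^{\nu}$ gives $|h(x)-h_{\epsilon}(x)|=\|x\|_{\dn}^{\nu}\bigl|h(\pi_{\dn}(x))-C\sigma(A\pi_{\dn}(x)+b)\bigr|\le\epsilon\|x\|_{\dn}^{\nu}$, which is \eqref{eq:hom_fun_approx_error}. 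I expect no genuinely hard step here: the whole content is the identification of $S$ with the unit sphere of $\|\cdot\|_{\dn}$ (immediate from Definition \ref{def:hom_norm_Rn} once monotonicity guarantees well-definedness), after which the estimate is a one-line consequence of homogeneity; in fact the inclusion $S\subset\interior\Omega$ is more than what is needed, the argument using only $S\subset\Omega$.
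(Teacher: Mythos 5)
Your proof is correct and follows essentially the same route as the paper: identify the unit sphere of the canonical $\dn$-homogeneous norm with $S\subset\Omega$, apply the classical bound \eqref{eq:ANN_err} at the projected point $\dn(-\ln\|x\|_{\dn})x\in S$, and factor out $\|x\|_{\dn}^{\nu}$ by homogeneity (the paper phrases the first step as the coincidence of $h_{\epsilon}$ with $g_{\epsilon}$ on $S$, which is the same observation). Your remark that $S\subset\Omega$ already suffices is also accurate.
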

\begin{proof}
	By construction the homogeneous ANN $h_{\epsilon}$ coincides with the original ANN $h_{\epsilon}$ on the unit sphere, i.e.,
	\[
	h_{\epsilon}(y)=g_{\epsilon}(y), \quad \forall y \in S.
	\]  
	This means that 
	\[
	|g(y)-g_{\epsilon}(y)|=|h(y)-h_{\epsilon}(y)|\leq \epsilon, \quad \forall y\in S.
	\]
	Since $\|\cdot\|_{\dn}$ is a canonical homogeneous norm induced by the norm $\|\cdot\|$ then  $y=\dn(-\ln \|x\|_{\dn})x\in S$ for all $x\in \R^n\backslash\{\zero\}$ and using the homogeneity we derive
	\[
	\tfrac{|h(x)-h_{\epsilon}(x)|}{\|x\|_{\dn}^{-\nu}}\leq |h(\dn(-\ln \|x\|_{\dn})x)-h_{\epsilon}(-\ln \|x\|_{\dn})x)|\leq \epsilon.
	\]
	The case of homogeneous vector field can be analyzed similarly. 
\end{proof}

The above  proposition shows that the conventional ANN can be straightforwardly upgraded to the homogeneous ANN provided that the information about the dilation symmetry of the function $h$ is available (e.g., from physics).  The idea of the transformation/upgrade is very simple: we just assign a unit sphere $S\subset \Omega$ and expand the values of the $g_{\epsilon}$ from the unit sphere to the whole space $\R^n$ by means of the dilation.  Notice that with such a homogeneous extrapolation the local error estimate \eqref{eq:ANN_err} (being uniform on $\Omega$) is converted to the global $\|x\|_{\dn}$-dependent estimate \eqref{eq:hom_fun_approx_error} on $\R^n$. The similar upgrade can be proposed for a function $h:\D\subset \R^n\mapsto \R$ defined on a $\dn$-homogeneous cone\footnote{A set $\D$ is said to be a $\dn$-homogeneous cone if $\dn(s)\D\subset \D$ for all $s\in \R$.} $\D$. The condition \eqref{eq:S_in_Omega} becomes $S\cap \D \subset \Omega\cap \D$ in this case.

\section{Data-Driven Dilation Symmetry}
In this section we continue the study of  a possibility of transformation of an existing artificial neural network to a homogeneous one.  We assume that an ANN is  given by $g_{\epsilon}: \R^n \mapsto \R^m$ such that
\[
g_{\epsilon}(x)=C\sigma(Ax+b)
\]
approximates a  function $g\in C(\R^{n}\backslash\{\zero\}, \R)$ on a compact $\Omega\subset \R^n\backslash\{\zero\}$, where $A\in \R^{N\times n},b\in \R^{N}, C\in \R^{1\times N}$. The index $\epsilon>0$ indicates that the approximation precision of ANN (see Theorem \ref{thm:UAT})  is given by 
\[
\sup_{x\in \Omega}|g(x)-g_{\epsilon}(x)|\leq \epsilon, \quad \forall x\in \Omega.
\]
The ANN is supposed to be well trained to approximate a homogeneous  function $g=h$ on a compact $\Omega$, i.e., $\epsilon>0$ is assumed to be small enough. But now we consider the case when \textit{some (or any) information about dilation symmetry of the function  $h$ is unavailable}, so, first, the parameters of the dilation group and/or the homogeneity degree have to be identified using  $g_{\epsilon}$, next, a homogeneous ANN has to be designed.

\subsection{Practical homogeneity degree}
Let us study the problem of identification of an \textit{unknown} homogeneity degree of the function $h$   under the assumption that the linear dilation $\dn(s)=e^{sG_{\dn}},s\in \R$ is \textit{known}.  

By Definition \ref{def:hom_fun},  the scalar-valued function $h:\R^n\mapsto \R$ is $\dn$-homogeneous  of degree $\nu\in \R$ if
\begin{equation}\label{eq:ch16_hom_iden_fun}
	h(e^{sG_{\dn}}x)=e^{\nu s} h(x), \quad \forall x\in \R^n, \quad \forall s\in \R,
\end{equation}
where $\dn(s)=e^{sG_{\dn}}$ is a linear dilation in $\R^n$. If $g_{\epsilon}$ approximates a $\dn$-homogeneous function $g=h$ on $\Omega$ with a high enough precision then we should have 
\[
g_{\epsilon}(\dn(s)x)\approx e^{\nu s}g_{\epsilon }(x), \quad x\in \Omega, \quad s\in \R: \dn(s)x\in \Omega.
\]

Let a norm $\|\cdot\|$ in $\R^n$ and the set $\Omega$ be such that $S=\{x\in \R^n: \|x\|=1\}\subset \Omega$ and the dilation $\dn$ be monotone with respect to $\|\cdot\|$.  In this case, due to $\dn(-\ln \|x\|_{\dn})x\in S$, the above approximate identity implies
\begin{equation}
	g_{\epsilon}(\dn(-\ln \|x\|_{\dn})x)\approx \|x\|_{\dn}^{-\nu}g_{\epsilon }(x), \quad \forall x\in \Omega,
\end{equation}
where $\|\cdot\|_{\dn}$ is the canonical homogeneous norm induced by the norm $\|\cdot\|$. Hence, for   $x\neq \zero$  we have 
\begin{equation}
	\nu \approx \frac{1}{\ln \|x\|_{\dn}}\ln \frac{g_{\epsilon}(x)}{g_{\epsilon}(\dn(-\ln \|x\|_{\dn})x)}
\end{equation}
provided that 
\begin{equation}\label{eq:hom_detect}
	0<\frac{g_{\epsilon}(x)}{g_{\epsilon}(\dn(-\ln \|x\|_{\dn})x)}<+\infty.
\end{equation}
Notice that the implication  
$$
h(x)\neq 0 \quad \Rightarrow \quad 0<\frac{h(x)}{h(\dn(s)x)}<+\infty,\quad \forall s\in \R
$$
is necessary  for a function $h$ to be homogeneous. Indeed, if $h(x)\neq 0$ and $\frac{h(x)}{h(-\dn(\ln \|x\|_{\dn})x)}<0$ then the identity \eqref{eq:ch16_hom_iden_fun} is impossible and $h$ is surely non-homogeneous.  Therefore,  the condition \eqref{eq:hom_detect} can be utilized for detection of the non-homogeneity of $g_{\epsilon}$. 
To decrease an impact of the approximation error on a decision about non-homogeneity of $g_{\epsilon}$, we restrict the domain in \eqref{eq:hom_detect} to the compact set  
\begin{equation}
	K_{\delta}=\left\{x\in \Omega:\begin{smallmatrix} \delta\leq \|x\|_{\dn}\leq 1, \\
		\min\{|g_{\epsilon}(x)|,|g_{\epsilon}(\pi_{\dn}(x))|
	\}\geq \delta\end{smallmatrix}\right\},
\end{equation}
where $\pi_{\dn}(x)=\dn(-\ln \|x\|_{\dn})x$ is the $\dn$-homogeneous projector of $x\neq \zero$ on the unit sphere $S$ and  $\delta>0$ is small enough. In this case, if 
\begin{equation}\label{eq:hom_neces_fun}
	\frac{g_{\epsilon}(x)}{g_{\epsilon}(\pi_{\dn}(x))}>0, \quad \forall  
	x\in K_{\delta} 
\end{equation}
then the \textit{practical homogeneity degree of the function} $g_{\epsilon}$ can be defined as 
\begin{equation}\label{eq:prac_hom_degree_fun}
	\nu_{\epsilon }=\frac{1}{M_{\delta}} \int_{K_{\delta}} \frac{1}{\ln \|x\|_{\dn}}\ln \frac{g_{\epsilon}(x)}{g_{\epsilon}(\pi_{\dn}(x))} dx
\end{equation}
provided that $\epsilon>0$ is small enough, where
$
M_{\delta}=
\int_{K_{\delta}} dx.
$
If the condition \eqref{eq:hom_neces_fun} is not fulfilled then $g_{\epsilon}$ is not an approximation of homogeneous function or the approximation precision $\epsilon>0$ is not enough to detect $\dn$-homogeneity.  
\begin{proposition}\label{prop:prac_hom_degree_fun}
	Let $g_{\epsilon}$ be ANN \eqref{eq:ANN} approximating a non-zero continuous function $h\in C(\R^n\backslash\{\zero\}, \R)$ on a compact $\Omega\subset \R^n$ with an error $\epsilon>0$, i.e., the  inequality \eqref{eq:ANN_err} is fulfilled for $g=h$. 
	Let the  function $h$  be $\dn$-homogeneous of degree $\nu\in \R$.
	If there exists a norm $\|\cdot\|$ in $\R^n$ such that 
	\begin{equation}
		S=\{x\in \R^n :\|x\|=1\}\subset \interior \Omega
	\end{equation}
	and the dilation $\dn$ is monotone with respect to $\|\cdot\|$, then for sufficiently small $\delta>0$ it holds
	\begin{equation}
		\nu_{\epsilon}\to \nu \quad \text{ as } \quad \epsilon\to 0
	\end{equation}
	where $\nu_{\epsilon}\in \R$ is given  by \eqref{eq:prac_hom_degree_fun}.
\end{proposition}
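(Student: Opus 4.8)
The plan is to turn the assertion into a perturbation argument around the exact function $h$. The structural fact that drives everything is this: applying \eqref{eq:homogeneous_function_Rn} with $s=-\ln\|x\|_{\dn}$ gives $h(\pi_{\dn}(x))=h(\dn(-\ln\|x\|_{\dn})x)=\|x\|_{\dn}^{-\nu}h(x)$ for every $x\neq\zero$, so wherever $h(x)\neq0$ and $\|x\|_{\dn}\neq1$ the quantity in \eqref{eq:prac_hom_degree_fun} formed from $h$ equals $\tfrac{1}{\ln\|x\|_{\dn}}\ln\|x\|_{\dn}^{\nu}=\nu$ \emph{identically}. Hence, if $g_{\epsilon}$ were literally equal to $h$, \eqref{eq:prac_hom_degree_fun} would return exactly $\nu$, and the whole task is to show that both the integrand and the domain $K_{\delta}$ are stable under the uniform perturbation $\sup_{\Omega}|g_{\epsilon}-h|\le\epsilon\to0$.

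Next I would fix the truncation level $\delta$ \emph{independently of} $\epsilon$. Since $h\not\equiv\zero$ is $\dn$-homogeneous it cannot vanish on all of $S$ (for the canonical homogeneous norm $S=S_{\dn}=\{x:\|x\|_{\dn}=1\}$), for otherwise $h(x)=\|x\|_{\dn}^{\nu}h(\pi_{\dn}(x))\equiv\zero$; pick $y_{*}\in S$ with $|h(y_{*})|=2c>0$. Using continuity of $h$ and of $\pi_{\dn}$, the inclusion $S\subset\interior\Omega$ and \eqref{eq:rel_norm_and_hom_norm_Rn}, choose $\mu\in(0,1)$ close to $1$ so that $V:=\{x\in\Omega:\mu\le\|x\|_{\dn}\le1,\ |h(x)|\ge c,\ |h(\pi_{\dn}(x))|\ge c\}$ has positive Lebesgue measure, and fix $\delta\in(0,\min\{\mu,c/2\})$. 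For $\epsilon\le\delta/4$, the bound $|g_{\epsilon}-h|\le\epsilon$ on $\Omega$ (recall $\pi_{\dn}(x)\in S\subset\Omega$) yields $V\subset K_{\delta}$, hence $M_{\delta}\ge|V|>0$; conversely every $x\in K_{\delta}$ has $|h(x)|,|h(\pi_{\dn}(x))|\ge\delta/2$, and since $h(x)$ and $h(\pi_{\dn}(x))=\|x\|_{\dn}^{-\nu}h(x)$ share their sign, so do $g_{\epsilon}(x)$ and $g_{\epsilon}(\pi_{\dn}(x))$ — so \eqref{eq:hom_neces_fun} holds and $\nu_{\epsilon}$ is well defined.

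Then comes the core estimate. On $K_{\delta}$, using $\ln|h(x)|-\ln|h(\pi_{\dn}(x))|=\nu\ln\|x\|_{\dn}$, write
$$\phi_{\epsilon}(x)-\nu=\frac{1}{\ln\|x\|_{\dn}}\left[\ln\frac{|g_{\epsilon}(x)|}{|h(x)|}-\ln\frac{|g_{\epsilon}(\pi_{\dn}(x))|}{|h(\pi_{\dn}(x))|}\right],\qquad \phi_{\epsilon}:=\frac{1}{\ln\|x\|_{\dn}}\ln\frac{g_{\epsilon}(x)}{g_{\epsilon}(\pi_{\dn}(x))}.$$
Since both $|h|$-values are $\ge\delta/2$ on $K_{\delta}$ and $|g_{\epsilon}-h|\le\epsilon$, each bracketed logarithm is $O(\epsilon/\delta)$, giving the pointwise bound $|\phi_{\epsilon}(x)-\nu|\le 8\epsilon/(\delta\,|\ln\|x\|_{\dn}|)$ on $K_{\delta}$. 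Away from $S$ this is uniformly $o(1)$; \emph{the only delicate region, and the main obstacle}, is the thin shell where $\|x\|_{\dn}\to1$ and $1/\ln\|x\|_{\dn}$ blows up — indeed $\int dx/|\ln\|x\|_{\dn}|$ diverges there, so the pointwise bound alone is not integrable. I would resolve it by noting that $x=\dn(\ln\|x\|_{\dn})\pi_{\dn}(x)$ with $\|\pi_{\dn}(x)\|=1$ gives $\|x-\pi_{\dn}(x)\|\le\kappa\,|\ln\|x\|_{\dn}|$ on that shell, so by local Lipschitz (or merely Dini) continuity of the ANN $g_{\epsilon}=C\sigma(Ax+b)$ — which holds for the usual activations — the numerator $\ln\tfrac{g_{\epsilon}(x)}{g_{\epsilon}(\pi_{\dn}(x))}$ is itself $O(|\ln\|x\|_{\dn}|)$ there, so $\phi_{\epsilon}$ stays \emph{bounded} near $S$; for a clean limit one wants this bound uniform in $\epsilon$ (true for a uniformly Lipschitz family $\{g_{\epsilon}\}$, or one simply restricts the integral in \eqref{eq:prac_hom_degree_fun} to $\|x\|_{\dn}\le1-\delta$).

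To finish, split $K_{\delta}=K_{\delta}^{\mathrm{far}}\cup K_{\delta}^{\mathrm{near}}$ at a level $\|x\|_{\dn}=1-\eta$: on $K_{\delta}^{\mathrm{far}}$ the pointwise bound gives $|\phi_{\epsilon}-\nu|\le 8\epsilon/(\delta|\ln(1-\eta)|)$, while on $K_{\delta}^{\mathrm{near}}$ boundedness of $\phi_{\epsilon}$ together with $|K_{\delta}^{\mathrm{near}}|\le|\{x:1-\eta\le\|x\|_{\dn}\le1\}|\to0$ (a decreasing family of sets shrinking to the null set $S$) makes the contribution small. Given $\gamma>0$, first pick $\eta$ so the near contribution is $<\tfrac{\gamma}{2}M_{\delta}$, then pick $\epsilon_{0}$ so the far contribution is $<\tfrac{\gamma}{2}M_{\delta}$ for all $\epsilon<\epsilon_{0}$; dividing by $M_{\delta}\ge|V|>0$ yields $|\nu_{\epsilon}-\nu|<\gamma$, i.e. $\nu_{\epsilon}\to\nu$ as $\epsilon\to0$. (Equivalently, the last two steps can be phrased as dominated convergence for $\phi_{\epsilon}\to\nu$ on $K_{\delta}\backslash S$.) The vector-valued case $\R^{n}\mapsto\R^{m}$ is identical componentwise.
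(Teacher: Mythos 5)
Your proposal follows essentially the same route as the paper's proof: use the exact identity $\tfrac{1}{\ln\|x\|_{\dn}}\ln\tfrac{h(x)}{h(\pi_{\dn}(x))}=\nu$ on $K_{\delta}$, write $g_{\epsilon}=h+o(\cdot,\epsilon)$ with $\sup_{\Omega}|o(\cdot,\epsilon)|\le\epsilon$, bound $|h|$ from below on $K_{\delta}$ (via its values on $S$), and pass to the limit in the averaged integral \eqref{eq:prac_hom_degree_fun}. You are, however, more careful than the paper on two points. First, you track the $\epsilon$-dependence of $K_{\delta}$ itself, exhibiting a fixed set $V$ of positive measure contained in $K_{\delta}$ for all small $\epsilon$ (so $M_{\delta}$ stays bounded away from zero) and verifying the sign condition \eqref{eq:hom_neces_fun}; the paper only asserts non-emptiness of $K_{\delta}$. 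Second, and more substantively, you isolate the shell $\|x\|_{\dn}\to 1$: there the perturbation bound $|\phi_{\epsilon}(x)-\nu|=O\bigl(\epsilon/(\delta\,|\ln\|x\|_{\dn}|)\bigr)$ is not integrable, whereas the paper's proof simply invokes the uniform convergence \eqref{eq:ch16_limit_tmp} ``on any compact'' and concludes --- a claim valid only on compacts bounded away from the unit sphere, so the paper silently skips exactly the delicate region you identify. Your repair (Lipschitz continuity of the ANN gives $\|x-\pi_{\dn}(x)\|=O(|\ln\|x\|_{\dn}|)$, hence $\phi_{\epsilon}$ bounded near $S$, then split the integral at $\|x\|_{\dn}=1-\eta$) is sound for each fixed $\epsilon$, but, as you yourself flag, the order of quantifiers in your final step (choose $\eta$ first, then $\epsilon_{0}$) needs that near-sphere bound to be uniform in $\epsilon$, i.e., a uniformly Lipschitz family $\{g_{\epsilon}\}$ or a truncation of $K_{\delta}$ to $\|x\|_{\dn}\le 1-\delta$; neither is granted by the hypotheses of the proposition. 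So your argument remains conditional at precisely the point the paper's own proof leaves unaddressed; apart from this shared subtlety (which your analysis makes visible rather than creates), your estimates are correct and the approach coincides with the paper's.
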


\begin{proof} 
	On the one hand,  since $h$ is $\dn$-homogeneous, then, for a sufficiently small $\delta>0$, the compact set $K_{\delta}\subset \Omega\backslash\{\zero\}$ is non-empty. Since $h$ is $\dn$-homogeneous  then $\nu=\frac{1}{\|x\|_{\dn}}\ln \frac{h(x)}{h(\dn(-\ln \|x\|_{\dn})x)}$ for all $x\in K_{\delta}$ and
	\[
	\nu=\frac{1}{M_{\delta}} \int_{K_{\delta}} \frac{1}{\ln \|x\|_{\dn}}\ln \frac{h(x)}{h(\pi_{\dn}(x))} dx.
	\]
	On the other hand, by definition $\nu_{\epsilon}$, we have
	\[
	\nu_{\epsilon }=\frac{1}{M_{\delta}} \int_{K_{\delta}} \frac{1}{\ln \|x\|_{\dn}}\ln \frac{h(x)+o(x,\epsilon )}{h(\pi_{\dn}(x)))+o(\pi_{\dn}(x)),\epsilon)} dx, 
	\]
	where $\sup_{y\in \Omega} |o(y,\epsilon)|\leq \epsilon$ (see \eqref{eq:ANN_err}).
	Since $h$ is assumed to be nonzero  and $\pi_{\dn}(x)=\dn(-\ln \|x\|_{\dn})x\in S$ then 
	$|h(\pi_{\dn}(x))|\geq \underline{h}=\inf_{y\in S\cap K_{\delta}} h(y)>0$. This implies that 
	\begin{equation}\label{eq:ch16_limit_tmp}
		\tfrac{1}{\|x\|_{\dn}}\ln \tfrac{g_{\epsilon}(x)}{g_{\epsilon}(\pi_{\dn}(x))} 
		\to \tfrac{1}{\|x\|_{\dn}}\ln \tfrac{h(x)}{h(\pi_{\dn}(x))}  \quad \text{ as } \quad \epsilon\to 0
	\end{equation}
	uniformly on any compact from $\Omega\cap K_{\delta}\backslash\{\zero\}$, so $\nu_{\epsilon}\to \nu$ as $\epsilon\to 0$.
\end{proof}

The practical homogeneity degree is still well-defined by the formula \eqref{eq:prac_hom_degree_fun} provided that the practical ``necessary'' condition of homogeneity \eqref{eq:hom_neces_fun} is fulfilled. The formula \eqref{eq:prac_hom_degree_fun}  defines the practical homogeneity degree as a mean value of some function. This value can also be approximated as follows
\begin{equation}\label{eq:prac_hom_degree_sum}
	\nu_{\epsilon}\approx\frac{1}{M}\sum_{i=1}^M \frac{1}{\ln \|x_i\|_{\dn}}\ln \frac{g_{\epsilon}(x_i)}{g_{\epsilon}(\pi_{\dn}(x_i))},
\end{equation} 
where the points $x_i\in K_{\delta}\cap \Omega$ are assumed to be uniformly
distributed in $K_{\delta}\cap \Omega$ and $M\in \N$ is a  large enough number. 
\subsection{Practical linear dilation}
Let us study the  identification problem of a dilation symmetry  (if any) for  the ANN $g_{\epsilon}$ which approximates a continuous function $g$.  
It is easy to see that if a function $g$  is $\dn$-homogeneous of degree 
$\nu$ (i.e., $g=h$) then, for any $\gamma>0$ this function is $\tilde \dn$-homogeneous of the degree $\tilde \nu=\gamma \nu$ with respect to the dilation
$
\tilde \dn(s)=\dn(\gamma s), s\in \R.
$
So, \textit{the identification of the dilation symmetry of $$g_{\epsilon}$$ can be reduced to identification of the generator $G_{\dn}\in \R^{n\times n}$ only}, 
while the homogeneity degree can always be  assigned to one of the following constants $\{-1,0,1\}$. In other words, if the sign of the homogeneity degree is unknown, then the three identification problems with $\nu=-1$, $\nu=0$ and $\nu=1$
has to be solved in order to identify a dilation symmetry of $g_{\epsilon}$.  \textit{Without loss of generality we assume below that the homogeneity degree $\nu$ of $g_{\epsilon}$ is known} or fixed to one of the following numbers $\{-1,0,1\}$. We just need to identify  an \textit{unknown}  generator $G_{\dn}\in \R^{n\times n}$ of the dilation $\dn$.


By Theorem \ref{thm:hom_function_theorem_Rn}, 
for any locally Lipschitz continuous (on $\R^n\backslash\{\zero\}$) $\dn$-homogeneous function $h$,
we have 
\begin{equation}\label{eq:ch16_hom_function_int_Rn}
	\int^{x}_{x^*}\!
	(\trace(G_{\dn})+\nu)h(y)-
	\!\sum\limits_{i: x_i\neq x_i^*}\!\!\!\Delta_i(h,y,x,x^*,,G_{\dn})\;dy
	\!=\!0
\end{equation}
for all  $x,x^*\in\!\R^n$ such that $R(x,x^*)\in \R^n\backslash\{\zero\}$.

If $g_{\epsilon}$ approximates a $\dn$-homogeneous function $h$ of degree $\nu$ with a rather high precision $\epsilon$ on a compact $\Omega$ then the identity \eqref{eq:ch16_hom_function_int_Rn} implies that 
\begin{equation}
	\int^{x}_{x^*}\!\!
	(\trace(G)+\nu)g_{\epsilon}(y)-\!
	\!\sum\limits_{i: x_i\neq x_i^*}\!\!\!	\Delta_i(g_{\epsilon},y,x,x^*,G_{\dn})\;dy
	\approx0.
\end{equation}
The only unknown  parameter $G_{\dn}\in \R^{\times n}$ is involved in above approximate identity in the affine  manner. Its identification can be based on a quadratic  optimization. 

Let $\Omega$ have a non-empty interior. For $\delta\in (0,1)$  let
$
\Omega_{\delta}\subset \interior \Omega
$
be such that 
\begin{equation}\label{eq:Omega_delta}
\Omega_{\delta}\dot+B_{\delta}\subset \Omega \quad\text{ and }\quad \Omega_{\delta}\cap B_{\delta}=\emptyset,
\end{equation}
where $B_{\delta}=\{x=(x_1,...,x_n)^{\top}\in \R^n: |x_i|\leq \delta,i=1,...,n\}$. Homogeneous functions may have singularity at zero, so the condition $\Omega_{\delta}\cap B_{\delta}=\emptyset$ exclude this potential singularity point.    
Let us introduce 
\begin{equation}\label{eq:practical_dilation_Rn}
	G_{\epsilon}\in \argmin_{G\in \Xi} J_{\epsilon}(G),
\end{equation}
where $\Xi\subset \R^{n\times n}$ is a set of admissible $n\times n$ matrices and 
\begin{equation}\label{eq:ch16_Jh}
	J_{\epsilon}(G)\!=\! \frac{1}{M_\delta}	\int_{\Omega_{\delta}} \int_{B_\delta} \mathcal{J}^2_{\epsilon}(x^*+z,x^*,G)\, dz dx^*,
\end{equation}
\[
\mathcal{J}_{\epsilon}(x,x^*\!,G)\!=\!\!\int^{x}_{x^*}\!\!\!
(\trace(G)+\nu)g_{\epsilon}(y)-\!
\!\!\sum\limits_{i: x_i\neq x_i^*}\!\!\!\!	\Delta_i(g_{\epsilon},y,x,x^*,G_{\dn})\;dy,
\]
and 
$
M_{\delta}=
\int_{\Omega_{\delta}} dx \cdot \int_{B_{\delta}} dz.
$
If $G_{\epsilon}$ is anti-Hurwitz then, in the view of the following theorem, it can be interpreted as a generator of 
the \textit{practical dilation in $\R^n$}.

\begin{theorem}\label{thm:prac_Gd_fun} Let $\Xi\subset \R^{n\times n}$ be a class of admissible generators of linear dilations.
Let $g_{\epsilon}$ be an ANN \eqref{eq:ANN} approximating a function $h:\R^n\mapsto \R$ on a compact $\Omega\subset \R^n$ with an error $\epsilon>0$, i.e., the  inequality \eqref{eq:ANN_err} is fulfilled for $g=h$. 
If $h$ is locally Lipschitz continuous  on $\R^n\backslash\{\zero\}$  and $\dn$-homogeneous of degree $\nu\in \R$
with respect to a linear dilation  $\dn(s)=e^{G_{\dn}s}, s\in \R^n$ with $G_{\dn}\in \Xi$,
then
\[
\min_{G\in \Xi} J_{\epsilon}(G)\to 0  \quad \text{as} \quad \epsilon\to 0.
\]
\end{theorem}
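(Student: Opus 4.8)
\noindent\emph{Proof strategy.} The plan is to exploit that the true generator $G_{\dn}$ lies in the admissible class $\Xi$, so that
\[
0\le\min_{G\in\Xi}J_{\epsilon}(G)\le J_{\epsilon}(G_{\dn}),
\]
and it then suffices to prove $J_{\epsilon}(G_{\dn})\to 0$ as $\epsilon\to 0$. I would set $o(y,\epsilon):=g_{\epsilon}(y)-h(y)$, so that $\sup_{y\in\Omega}|o(y,\epsilon)|\le\epsilon$ by \eqref{eq:ANN_err}, and write $\mathcal{J}_{0}(x,x^*,G)$ for the functional obtained from $\mathcal{J}_{\epsilon}$ by replacing $g_{\epsilon}$ with $h$. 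Since $g_{\epsilon}$ enters $\mathcal{J}_{\epsilon}$ linearly — both through the term $(\trace(G)+\nu)g_{\epsilon}(y)$ and inside each $\Delta_i$ — the difference $\mathcal{J}_{\epsilon}(x,x^*,G)-\mathcal{J}_{0}(x,x^*,G)$ equals the same expression with $g_{\epsilon}$ replaced by the error $o(\cdot,\epsilon)$.

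First I would check that $\mathcal{J}_{0}(x^*+z,x^*,G_{\dn})=0$ for every $x^*\in\Omega_{\delta}$ and $z\in B_{\delta}$: this is exactly the Euler-type integral identity \eqref{eq:ch16_hom_function_int_Rn}, applicable because $h$ is locally Lipschitz on $\R^n\backslash\{\zero\}$ and $\dn$-homogeneous of degree $\nu$ with generator $G_{\dn}$. The only hypothesis to verify is $R(x^*+z,x^*)\subset\R^n\backslash\{\zero\}$, and it follows from \eqref{eq:Omega_delta}: the rectangle $R(x^*+z,x^*)$ lies in $\Omega_{\delta}\dot+B_{\delta}\subset\Omega$, while if it contained $\zero$ then each coordinate interval, of length $|z_i|\le\delta$, would contain both $x_i^*$ and $0$, forcing $\|x^*\|_{\infty}\le\delta$, i.e. $x^*\in B_{\delta}$, contradicting $\Omega_{\delta}\cap B_{\delta}=\emptyset$. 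Hence $\mathcal{J}_{\epsilon}^{2}=(\mathcal{J}_{\epsilon}-\mathcal{J}_{0})^{2}$ on the whole integration domain of \eqref{eq:ch16_Jh} with $G=G_{\dn}$.

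Next I would establish a uniform $O(\epsilon)$ bound on $\mathcal{J}_{\epsilon}(x^*+z,x^*,G_{\dn})-\mathcal{J}_{0}(x^*+z,x^*,G_{\dn})$ for $(x^*,z)\in\Omega_{\delta}\times B_{\delta}$. The contribution $\int_{x^*}^{x}(\trace(G_{\dn})+\nu)\,o(y,\epsilon)\,dy$ is at most $|\trace(G_{\dn})+\nu|\,(2\delta)^{n}\epsilon$ in absolute value. The $\Delta_i$ contribution is the only delicate one, because $\Delta_i$ carries the factor $1/(x_i-x_i^*)$, which is not bounded below on $B_{\delta}$; however, exactly as in the proof of Theorem \ref{thm:hom_function_theorem_Rn}, $\Delta_i$ does not depend on $y_i$, so integrating it in $y_i$ over $[x_i^*,x_i]$ produces a factor $x_i-x_i^*$ that cancels the denominator, leaving
\[
\int_{x^*}^{x}\!\Delta_i(o,y,x,x^*,G_{\dn})\,dy=\!\int\!\Bigl(o(y_{x_i},\epsilon)\,e_i^{\top}G_{\dn}y_{x_i}-o(y_{x_i^*},\epsilon)\,e_i^{\top}G_{\dn}y_{x_i^*}\Bigr)\,dy_1\cdots dy_{i-1}\,dy_{i+1}\cdots dy_n,
\]
an ordinary integral over an $(n{-}1)$-dimensional box of side $\le\delta$ whose integrand is bounded by $\epsilon\,\|G_{\dn}\|\sup_{y\in\Omega}\|y\|$. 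Summing over the at most $n$ active indices gives $|\mathcal{J}_{\epsilon}-\mathcal{J}_{0}|\le C\epsilon$ on $\Omega_{\delta}\times B_{\delta}$ with a constant $C=C(\delta,\Omega,G_{\dn},\nu,n)$ independent of $\epsilon$, hence $J_{\epsilon}(G_{\dn})=\tfrac{1}{M_{\delta}}\int_{\Omega_{\delta}}\int_{B_{\delta}}(\mathcal{J}_{\epsilon}-\mathcal{J}_{0})^{2}\,dz\,dx^*\le C^{2}\epsilon^{2}\to 0$.

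I expect the main obstacle to be the apparently singular factor $1/(x_i-x_i^*)$ in $\Delta_i$: one must recognize the cancellation produced by the inner $y_i$-integration before any uniform estimate can be written down; past that point the argument is a routine uniform-convergence computation. A minor additional point to record is that $\delta>0$ should be taken small enough that a set $\Omega_{\delta}$ obeying \eqref{eq:Omega_delta} exists and has positive Lebesgue measure — possible because $\Omega$ has non-empty interior — so that $M_{\delta}>0$ and $J_{\epsilon}$ is well defined.
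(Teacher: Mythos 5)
Your proposal is correct and follows essentially the same route as the paper's proof: since $G_{\dn}\in \Xi$, bound $\min_{G\in \Xi}J_{\epsilon}(G)$ by $J_{\epsilon}(G_{\dn})$, invoke the integral identity \eqref{eq:hom_function_int_Rn} of Theorem \ref{thm:hom_function_theorem_Rn} for $h$, and use the uniform approximation error \eqref{eq:ANN_err} to conclude that $J_{\epsilon}(G_{\dn})\to 0$. In fact your write-up is more careful than the paper's (which only states pointwise convergence of $\mathcal{J}_{\epsilon}$ to zero on $\Omega_{\delta}$ and concludes directly): you additionally verify that the rectangles $R(x^*+z,x^*)$ avoid $\zero$ and lie in $\Omega$, and you obtain the uniform $O(\epsilon)$ bound by exploiting the cancellation of the factor $1/(x_i-x_i^*)$ under the inner $y_i$-integration, which is exactly what is needed to pass from the integrand to $J_{\epsilon}(G_{\dn})\le C^{2}\epsilon^{2}$.
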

\begin{proof}
By Theorem \ref{thm:hom_function_theorem_Rn}, we have 
\[
\int^{x}_{x^*}\!
(\trace(G_{\dn})+\nu)h(y)-
\!\!\sum\limits_{i: x_i\neq x_i^*}\!\!\!\Delta_i(h,y,x,x^*,G_{\dn})\;dy=0
\]
for all $x,x^{*}\in \R^{n}\backslash\{\zero\}$ such that $R(x,x^*)\in  \R^{n}\backslash\{\zero\}$. 
Since $g_{\epsilon}(x)\to h(x)$ as $\epsilon\to 0$ uniformly on compacts from $\Omega$ then
\[
\int^{x}_{x^*}\!
(\trace(G_{\dn})+\nu)g_{\epsilon}(y)-
\!\!\sum\limits_{i: x_i\neq x_i^*}\!\!\!\	\Delta_i(g_{\epsilon},y,x,x^*,G_{\dn}) \;dy\to  0
\]
as $\epsilon\to 0$ for all $x,x^{*}\in \Omega_{\delta}$. This means that  
\[
\min_{G\in \Xi} J_{\epsilon}(G)\to 0  \quad \text{as} \quad \epsilon\to 0.
\]
The proof is complete
\end{proof}

Notice that the above theorem  implies that
\[
G_{\dn}\in \mathcal{P}:=\lim_{\epsilon \to 0}\argmin_{G\in \Xi} J_{\epsilon}(G).
\]
provided that $G_{\dn}\in \Xi$.
However, the set $\mathcal{P}$ is not a singleton in the general case and 
the problem of identification of the generator $G_{\dn}$ is ill-posed even if $\Xi=\Xi_{\rm lin}$, where
\begin{equation}
\Xi_{\rm lin}=\{G\in \R^{n\times n}: G \text{ is anti-Hurwitz}\}.
\end{equation}
Indeed, for example, for any $\gamma>0$, the function $h:\R^2\mapsto \R$ given by 
\begin{equation}\label{eq:ch16_demo_h_fun_1}
h(x)=x^{\top}x, \quad x=(x_1,x_2)^{\top}\in \R^2
\end{equation}
is $\dn_{\gamma}$-homogeneous of degree $2$ with respect to the linear dilation 
$
\dn_{\gamma}(s)=e^s\left(\begin{smallmatrix} \cos(\gamma s) & -\sin(\gamma s)\\ \sin(\gamma s) & \cos(\gamma s)\end{smallmatrix}\right),  s\in \R.
$
So, the linear dilation symmetry of the function $h$ given above   cannot be uniquely identified.

To reduce the uncertainty in the dilation identification,  the cost functional can be modified as follow
\begin{equation}\label{eq:practical_dilation_Rn_reg}
G_{\epsilon}=\argmin_{G\in \Xi} J_{\epsilon}(G)+\xi \trace(G^{\top}G),
\end{equation}
where $\Xi=\R^{n\times n}$ and $\xi>0$ is a small  regularization parameter.  
The second term in the above formula is a strictly convex function (being a norm) in the space of square matrices.
 The quadratic optimization problem becomes strictly convex with respect to $G$, so, for any fixed $\epsilon>0$, the matrix $G_{\epsilon}$ is uniquely defined by \eqref{eq:practical_dilation_Rn_reg}. Another possible way to reduce the uncertainty is to restrict the set of admissible generators $\Xi$, for example, to diagonalizable (or simply diagonal) matrices. However, this introduces additional a-priori assumption about  the function. 

The functional   $J_{\epsilon}$ admits the following approximation 
\begin{equation}\label{eq:J_eps_approx}
J_{\epsilon}(G)\approx \frac{1}{M L}\sum_{k=1}^{M} \sum_{j=1}^L \mathcal{J}^2_{\epsilon}(x_k+z_j,x_i,G)\, dz
\end{equation}
where the points $x_k\in \Omega_{\delta}$ are uniformly distributed in $\Omega_{\delta}$, $z_{j}\in B_{\delta}$ are uniformly distributed in $B_{\delta}$ and $M, L\in \N$. Finally, denoting $f(x,x^*\!,y,G)=(\trace(G)+\nu)g_{\epsilon}(y)-\!
\!\!\sum\limits_{i: x_i\neq x_i^*}\!\!\!\!\Delta_i(g_{\epsilon},y,x,x^*,G)$ we derive the following approximation of  the functional $\mathcal{J}^2_{\epsilon}$ for a sufficiently small $\delta\in (0,1)$:
\begin{equation}\label{ee:cal_J_eps_approx}
 \mathcal{J}_{\epsilon}(x_i+z_j,x_i,G)\approx\prod^{n}_{i=1}e^{\top}_iz_j\sum_{\lambda\in\{0,1\}^{n}} f(x_k+z_j,x_k,x_k+\diag(\lambda)z_j,G).
\end{equation}
The presented discretizations of the functionals $J_{\epsilon}$ and $\mathcal{J}_{\epsilon}$ can be utilized for identification of the generator $G_{\epsilon}$ of the practical dilation, however, in this case,  an anti-Hurwitz solution of the optimization problem   \eqref{eq:practical_dilation_Rn} may exist even if the ANN $g_{\epsilon}$ approximates  a non-homogeneous function. So, the practical necessary condition of homogeneity \eqref{eq:hom_neces_fun}
has be utilized in order to check if the ANN  indeed approximates a $\dn_{\epsilon}$-homogeneous (at least on $\Omega$) function. In the view of Proposition \ref{thm:prac_Gd_fun}, a correctness of this conclusion depends on approximation precision of ANN and increases as $\epsilon\to 0$.   
\section{Examples of Homogeneous ANNs}
\subsection{Example 1: Homogeneous  pattern recognition}
The scaling invariance is desirable feature for a pattern recognition by ANN. There are several approaches
to exhibit the required invariance (see, e.g.,  \cite[page 262]{Bishop2006:Book}). One of them is to build the invariance properties into the structure of a neural network. The present example uses the generalized homogeneity for this purpose. 

For simplicity, we assume that we deal with gray-scale images. Following the conventional approach, we  associated any image with a function $(z_1,z_2)\mapsto \phi(z)$ of two arguments, which value $\phi(z)\in [0,255]$ define a tone of the gray color at the coordinates $z=(z_1,z_2)^{\top}$. The value $\phi(z)=0$ corresponds to the white color, while the value $\phi(z)=255$ means that the point with the coordinate $z$ has the black color. A pattern depicted in the image is assumed to be centered. The origin $z=(0,0)$ of the coordinate frame corresponds to the center of the image.  In computer, the coordinates of the points $z$ as well as the possible values of $\phi(z)$ are represented by integers , i.e., $z\in \mathbb{Z}^2, \phi(z)\in \{0,1,2,...,255\}$. 
We extend the domain of $\phi$ to $\R^2$ and  the range of $\phi$ to the interval $[0,255]$, respectively.
This extension of the function $\phi$ can always be made in a continuous manner.  For the points $z$ out of the image range we assign $\phi(z)=0$. In this case, the size of image does not impact to the definition of $\phi\in C_{c}(\R^2,\R)$.

 Traditionally, the values of $\phi(z)$ are utilized as inputs to ANN. Since the function $\phi$ is not homogeneous even with respect to the standard dilation $z\mapsto e^{s} z$, then the efficiency of a homogeneous ANN is questionable in this case.  To homogenize the problem, we introduce the following transformations
\begin{equation}\label{eq:hom_coordinates}
x_i(\phi)=\int_{\R^2} b_i(z)\phi(z)d z, \quad i=1,2,...,n
\end{equation}
where $b_i:\R^2\mapsto \R$ are standard homogeneous functions of positive degree $r_i>-2$, i.e., $b_i(e^sz)=e^{r_is}b(z),\forall s \in \R, \forall z\in \R^2$ and $n\in \N$. For example, we may take 
$b_i(z)=\frac{z^{p_i}_{1}z_2^{q_i}}{r_i!}$, where $m_i\in \{1,2\}$ and $p_i,q_i\in\N\cup\{0\}$, $r_i=p_i+q_i$. In this case, the functions $b_i$ are elements of polynomial basis in $L^2(\Omega,\R)$, where $\Omega\subset \R^2$ contains a support of $\phi$. So, any function $\phi\in C_c(\R^2,\R)\cap L^{2}(\Omega,\R)$ can be approximated as  linear combination of the polynomial basis functions $b_i$ with the coefficients dependent on  the vector $x=(x_1^{\top},...,x_n^{\top})^{\top}\in \R^{2n}$  related with coordinates of $\phi$ in the polynomial basis. Since the function $\phi$ is compactly supported with the support dependent on the image size, the computation of $x_i$ is not a difficult problem and it can be easily parallelized.

 Notice that  the zoom of the image corresponds to the  coordinate transformation $z\mapsto 
e^{s} z$ with $s\in \R$, which  implies  the 
transformation $\phi\mapsto \phi_s$, where $\phi_s(z):=\phi(e^{-s}z)$. Hence, we derive
\[
\begin{split}
x_i(\phi_s)=&\int_{\R^2} b_i(z)\phi(e^{-s}z)d z\\
=&e^{2s} \int_{\R^2} e^{i_is}b_i(e^{-s}z)\phi(e^{-s}z)d (e^{-s}z)\\
=&
e^{(2+p_i)s}x_i(\phi)
\end{split}
\]
In other words, the dilation $z\mapsto e^sz$ (the zoom of the image) implies the dilation of the coordinates $x_i\mapsto e^{(2+r_i)s}	x_i$, $i=0,...,n$. The vector $x=(x_1,...,x_n)^{\top}$ can be utilized as an input to an ANN, which is going to be designed  $\dn$-homogeneous of degree $0$ with respect to  the dilation
\[
\dn(s)\!=\!\diag(e^{(2+r_1)s},...,,e^{(2+r_n)s}), \quad  s\!\in\! \R.
\]
In this case, the scaling/zoom of the image  will not impact  the output of the $\dn$-homogeneous ANN. Therefore, once being trained, such ANN can be utilized for the pattern recognition independently of the scaling of the function $\phi\in C_{c}(\R^2,\R)$. However, scaling of the image maps a function $\phi$ having a discrete domain $\mathbb{Z}^2$. In this case a reduction of the image size may lead to  a degradation of the image quality and impossibility of its recognition. So, the $\dn$-homogeneous ANN will have the natural limitation in recognition of  very reduced images.    

For numerical illustration, we consider the 8 images of the characters shown on Figure \ref{fig:char}. Each image has the size  100 $\times$ 100 pixels. The components of the input vector $x$ are defined by the formula \eqref{eq:hom_coordinates} with $n=8$ and $b_1=1,b_2=z_1, b_3=z_2, b_4=0.5z_1 z_2,b_5=0.5x^2,b_6=0.5y^2,b_7=x^2y/6;b_8=xy^2/6$.  To demonstrate the computational robustness of the method, the coordinates $x_i$ are computed using the following approximation of  the integral \eqref{eq:hom_coordinates}:
	\[
	x_i(\phi)\approx \sum_{k=1}^{100}\sum_{j=1}^{100} b_i(k-50,j-50)\phi(k,j).
	\] 
	
\begin{figure}[h!]
	\centering
	\includegraphics[width=10mm,height=10mm]{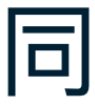}\hspace{3mm}
	\includegraphics[width=10mm,height=10mm]{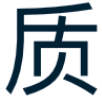}\hspace{3mm}
	\includegraphics[width=10mm,height=10mm]{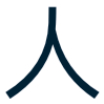}\hspace{3mm}
	\includegraphics[width=10mm,height=10mm]{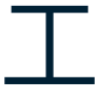}\hspace{3mm}
	\includegraphics[width=10mm,height=10mm]{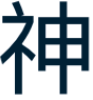}\hspace{3mm}
	\includegraphics[width=10mm,height=10mm]{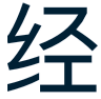}\hspace{3mm}
	\includegraphics[width=10mm,height=10mm]{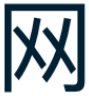}\hspace{3mm}
	\includegraphics[width=10mm,height=10mm]{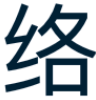}
	\caption{The characters for recognition}\label{fig:char}
	\end{figure}
 
The ANN is assumed to have $k=8$ outputs. For the $i$-th character at the input,  only the $i$-th output must have the value $1$. All other outputs have to be zero in this case.  The training set consist of nominal  images of the size 100 $\times$ 100 pixels and the scaled images the size  75$\times$ 75 pixels, 50 $\times$ 50 pixels  and 35 $\times$ 35 pixels, respectively. The total number of images in the training set is 32.
	 The matrices $A\in \R^{N \times n}$ and $B\in \R^{N\times 1}$ are selected randomly with $N=8$. The matrix $C\in \R^{k \times N}$ is  obtained by means of least square optimization on the training set.
   The trained homogeneous ANN has been tested on  images, which are  scaled (zoomed) to 30\%+$i$10\% with $i=0,1,,...,17$. Notice that the decrease of the size of the nominal image implies a degradation of the image quality  (see Fig \ref{fig:char_zoom}).  ll scaled images have been successfully recognized. 
%
   
   To test the robustness with respect to the noise,   a normally distributed noise has been added to the nominal images. About 50\% of pixels have been corrupted by the noise. The homogeneous ANN successfully recognized the noisy images as well.   The construed homogeneous ANN have good capabilities of pattern recognition with relatively small set of parameters to be tuned (just $8\times 8$ matrix $C$ of real numbers). 

%

\subsection{Example 2: Identification of homogeneous system}
Let us consider now the rotation dynamics \eqref{eq:examp_ODE} of the rigid body   with $J=\left(\begin{smallmatrix} 
	1 &0.3 &0.1\\ 0.3 & 1.2 & 0.2\\
	0.1 & 0.2 & 0.8
	\end{smallmatrix}\right)$. First, we approximate the  right-hand side of  \eqref{eq:examp_ODE}
\[
h(x)=-J^{-1}(\omega \times J\omega+\tau), \quad x=(\omega^{\top},\tau^{\top})^{\top}\in \R^6
\]
by means of the conventional ANN  with the sigmoid activation function
\[
g_{\epsilon}(x)=C\sigma(Ax+b)
\]
where $A\in \R^{N\times 6}, b\in \R^N$ are randomly selected for  $N=500$ and $C\in \R^{3\times N}$ is obtained by means of the least square error minimization for $M=20000$ measurements $y_i=h(x_i)$ of the function $h$ at the points $x_{i}\in \R^{6}$ randomly uniformly distributed in the set 
\[
\Omega=\{x\in \R^6: 0.95\leq |x|\leq 1.05\}.
\]
On $\Omega$ the obtained ANN has the following  approximation precision: 
$$
|g_{\epsilon}(x)-h(x)|\leq \epsilon\approx 0.0110, \quad x\in \Omega
$$
which is approximated by taking $20000$ random points in $\Omega$.
\subsubsection{Homogenization with known $\nu$ and $G_{\dn}$}By Proposition \ref{prop:upg_ANN_to_hANN}, the conventional ANN $g_{\epsilon}$ can be upgraded to a homogeneous ANN  $h_{\epsilon}$ by the formula \eqref{eq:hom_fun_ANN}, where $G_{\dn}=\diag(1,1,1,2,2,2)$, $\nu=2$ and the canonical homogeneous norm $\|\cdot\|_{\dn}$ is induced by the usual Euclidean norm $|\cdot|$. Such an upgrade slightly  improve the approximation precision ($L^{\infty}$-norm of the error) on the set $\Omega$. Moreover, due to homogeneity of $h$, the approximation precision of the homogeneous ANN (hANN), on the domains $\{x\in \R^6:|x|\leq 0.95\}$ and $\{x\in \R^6:|x|\geq 1.05\}$
is much better (in times) than the precision of the classical ANN.  Table \ref{tab:ANN_hANN} presents the comparison of the approximation precision for the domains
\[
\begin{array}{ll}
\Omega_{-4}=\{x\in \R^{6}: 0<|x|\leq 0.25\}, & \Omega_{4}=\{x\in \R^{6}: 1.75<|x|\leq 2\},\\
\Omega_{-3}=\{x\in \R^{6}: 0.25<|x|\leq 0.5\}, & \Omega_{3}=\{x\in \R^{6}: 1.5<|x|\leq 1.75\},\\
\Omega_{-2}=\{x\in \R^{6}: 0.5<|x|\leq 0.75\}, &\Omega_{3}=\{x\in \R^{6}: 1.25<|x|\leq 1.5\},\\
\Omega_{-1}=\{x\in \R^{6}: 0.75<|x|\leq 1\},& \Omega_{1}=\{x\in \R^{6}: 1<|x|\leq 1.25\}.
\end{array}
\]
The table contains the estimates of the norms $\|g_{\epsilon}(x)-h(x)\|_{L^{\infty}(\Omega_k,\R^6)}$ calculated using  20000 randomly selected points in $\Omega_{k}$, $k=-4,...,4$.
\begin{table}
	\centering
	\begin{tabular}{|c|c|c|c|c|}
	\hline 
\textbf{Domain}	& \textbf{ANN} &\textbf{hANN} & \textbf{hANN} & \textbf{hANN}\\
& & $\nu=2$& $\nu=\nu_{\epsilon}$& $\nu=1, G_{\dn}=G_{\epsilon}$\\
\hline 
$\Omega_{-4}$ 	&  0.0864& 0.0010 &0.0134 &0.0284\\
$\Omega_{-3}$ 	&0.0821  & 0.0024 & 0.0168& 0.0286\\
$\Omega_{-2}$ 	& 0.0583 & 0.0038 & 0.0170& 0.0213\\
$\Omega_{-1}$ 	&0.0274  & 0.0063 & 0.0135& 0.0122\\
$\Omega_0=\Omega$ 	&  0.0110& 0.0074 & 0.0082& 0.0094\\
$\Omega_{1}$ 	& 0.0612 & 0.0090& 0.0193& 0.0293\\
$\Omega_{2}$ 	& 0.2257 & 0.0115 & 0.0417&  0.0787\\
$\Omega_{3}$ 	&  0.5056& 0.0153 & 0.0773& 0.1551\\
$\Omega_{4}$ 	&  1.0409& 0.0192 &0.1160& 0.3097\\
\hline
	\end{tabular}   
\caption{Approximation precision of the conventional and homogeneous ANN}\label{tab:ANN_hANN}
\end{table}

\subsubsection{Homogenization using practical homogeneity degree} Let us assume that the homogeneity degree $\nu$ of the function $h$ is unknown, but the dilation $G_{\dn}$  is known. By Proposition \ref{prop:prac_hom_degree_fun}, we can identify a practical homogeneity $\nu_{\epsilon}$ using the conventional ANN,  which is already trained on the set $\Omega$.
We compute $\nu_{\epsilon}$ by the formula \eqref{eq:prac_hom_degree_sum} for randomly selected points $x_i\in \Omega$, where $i=1,2,...,M=2000$. This gives the practical estimate of the homogeneity degree
$\nu_{\epsilon}=1.9647$ being rather close to the real value $\nu=2$. The approximation precision of the hANN with the practical homogeneity degree is given in Table \ref{tab:ANN_hANN}. The use of homogeneous ANN (even with the practical homogeneity degree) essentially improves  the extrapolation capabilities of the ANN.
 
 \subsubsection{Homogenization using practical dilation symmetry} 
 Now we assume that the dilation is unknown and we need to approximate $G_{\dn}$ using the already well-trained ANN $g_{\epsilon}$.  For the data-driven identification of the dilation symmetry, we consider the set $\Omega_{\delta}=\{x\in \R^n: 0.98\leq |x|\leq 1.02\}$ and optimize  the approximate functional 
 \eqref{eq:J_eps_approx} with $M=4000$, $L=1, z_1=\delta(1,...,1)^{\top},\delta=0.01$.  We also restrict the class of admissible generators to diagonal matrices. For $\nu=1$ the obtained matrix $G_{\epsilon}=\diag(0.5199,
 0.5150,
 0.4972,
 1.0021,
 0.9965,
 1.0027)$, which, obviously,  corresponds to $G_{\dn}$ up to the scaling of $G_{\epsilon}$ and $\nu$ by  the multiplier $\gamma=2$ (please see explanations in the beginning of Section 4.2 for more details). 
 The homogeneous ANN constructed in this way is a homogeneous extrapolation  of the ANN $g_{\epsilon}$. In all domain $\Omega_{k}$ except  the training set $\Omega_{0}$, the approximation precision of hANN is better in times comparing with the  original ANN (see Table \ref{tab:ANN_hANN}).

 Notice that  For $\nu=0$ and $\nu=-1$,
 the optimization problem does not provide anti-Hurwitz solutions. This implies  impossibility of non-positive homogeneity degree of the function.

\subsection{Example 3: Homogeneous control implementation}

The canonical homogeneous norm $\|\cdot\|_{\dn}$ (see Definition \ref{def:hom_norm_Rn}) is utilized in  control theory for fast/robust regulation of  linear plants \cite{Polyakov_etal2015:Aut}, \cite{Polyakov2020:Book}:
\[
\dot x=Ax+Bu, \quad u=K_0+K\dn(-\ln \|x\|_{\dn})x,
\]
where $x(t)\in \R^n$ is the system state, $u(t)\in \R^m$ is the control input, $A\in \R^{n\times n}$, $B\in \R^{n\times m}$ are system matrices and $K_0,K\in \R^{m\times n}$ are control gains.  
The methodology for tuning of the control parameters $K_0,K$ of the above nonlinear controller is well developed, but the homogeneous norm $\|\cdot\|_{\dn}$ is defined implicitly (see Definition \eqref{def:hom_norm_Rn}). The practical implementation of this implicit regulator requires of an application of numerical numerical procedure (e.g., bisection method) for computation of $\|x\|_{\dn}$ at each $x\in \R^n$.  For a real-life control application, the norm $\|x(t)\|_{\dn}$ has to be computed in a real time. This may be impossible, if the computational power of the  digital  controller is low. In the view of Theorem \ref{thm:UAT_hom_fun}, the implicitly defined  norm $\|\cdot\|_{\dn}$ can be approximated with arbitrary high precision by means of an explicitly defined norm $\|\cdot\|_{\dn^*}$ and the homogeneous ANN \eqref{eq:hom_fun_ANN}:
\begin{equation}\label{eq:hann_norm}
\|x\|_{\dn}\approx \|x\|_{\dn^*}C\sigma(A\dn(-\ln \|x\|_{\dn^*})x+b).
\end{equation}
For example, let us consider the weighted dilation $\dn(s)=\diag(e^{5s}, e^{4s})$ in $\R^2$ and the canonical homogeneous norm $\|\cdot\|_{\dn,1}$ induced by the weighted  Euclidean norm $\|\cdot\|=\sqrt{x^{\top}Px}$ with 
$P=\left( \begin{smallmatrix} 0.1157 & 0.0194\\0.0194& 0.1186 \end{smallmatrix}\right)$. 
The explicit homogeneous norm  is defined as follows
\[
\|x\|_{\dn^*}=	\left(\Psi^{\top}(x)Q\Psi(x)\right)^{1/10}, \quad Q=\left( \begin{smallmatrix} 0.1225 & 0.0176 \\
0.0176 &0.0720\end{smallmatrix}\right),
\]
where $x=(x_1,x_2)^{\top}\in \R^2$, $\Psi(x)=\left(
\begin{smallmatrix}
 x_1\\
 |x_2|^{5/4}sign(x_2)
\end{smallmatrix}
\right)$ and the matrix $Q$ is selected such that the difference between $\|\cdot\|_{\dn}$ and $\|\cdot\|_{\dn^*}$ is minimized \cite{Polyakov2020:Book}. On the  unit sphere $S=\{x\in \R^2: \|x\|=1\}$ the difference between norms is about $\epsilon\approx 0.007$. Such an error is unacceptable for  some control applications, where a high control precision is required.  To improve the approximation, we use the homogeneous ANN \eqref{eq:hann_norm} with the sigmoid activation function $\sigma$ and randomly selected parameters $A\in \R^{N\times 1}$ and $b\in \R^N$. 
 By definition, the canonical homogeneous norm equals $1$ on the unit sphere, i.e., $\|x\|_{\dn}=1$ for all $x\in S$. 
 The vector $C\in \R^{1\times N}$ is optimized using least square criterion
 \[
 \sum_{i=1}^{M}\left(1\!-\!\|x_i\|_{\dn^*}C\sigma(A\dn(-\ln \|x_i\|_{\dn^*})x+b)\right)^2\to \min_{C\in \R^{1\times N }},
 \]
 where the points $x_i$ are randomly  distributed on the unit  sphere $S$. For $N=10$ such homogeneous ANN $$\|x\|_{\dn,\epsilon}=\|x\|_{\dn^*}C\sigma(A\dn(-\ln \|x\|_{\dn^*})x+b)$$ improves the approximation precision of $\|x\|_{\dn}$ on the unit sphere in two times $\epsilon\approx 0.0034$ in comparison with the norm $\|\cdot\|_{\dn^*}$(see Fig. \ref{fig:hnorm_sigmoid}). 

	\begin{figure}
 		\hspace{40mm}
 		\includegraphics[width=55mm,height=40mm]{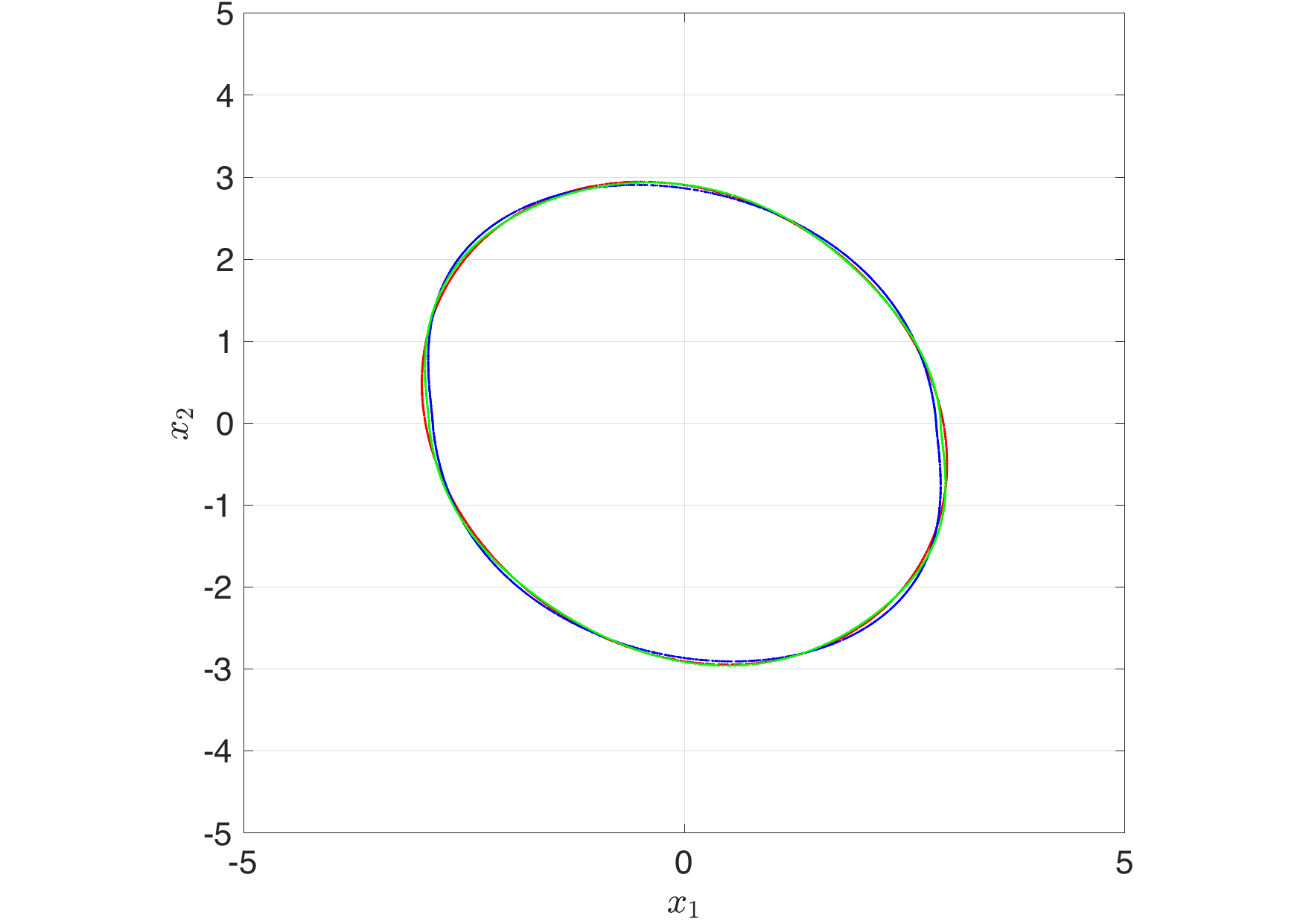}
 	\caption{Level lines $\|x\|_{\dn}=1$ (red), $\|x\|_{\dn^*}=1$ (blue) and $\|x\|_{\dn,\epsilon}=1$ (green)}
 	\label{fig:hnorm_sigmoid}
 \end{figure}
 For $N=20$ we have $\epsilon\approx 5\cdot 10^{-5}$.  The plot for the latter two cases is not presented since the corresponding spheres are visually indistinguishable with the unit sphere. 
  Therefore, the canonical homogeneous norm estimated by the homogeneous ANN can be  efficiently utilized in control engineering for implementation of the implicit homogeneous controller \cite{Polyakov_etal2015:Aut}, \cite{Polyakov2020:Book} in digital devices.  
%

\section{Conclusion}
 The paper introduces a class of ANNs being global approximators  of the so-called generalized homogeneous functions. The homogeneous universal approximation theorem is proven. A dilation symmetry of the homogeneous function allows local approximation results  to be extended globally.   
 
An information about dilation symmetry of a function approximated by a conventional ANN may be unknown. However, the  generalized  homogeneous function theorem proven in the paper shows that this information is, in fact, hidden in ANN provided that the ANN is well trained.  This important information can be extracted (identified) for the ANN by solving a quadratic optimization problem and, next, the conventional ANN can be easily homogenized using the identified dilation. The obtained homogeneous ANN is a global extrapolation of the conventional ANN away from the training set. 
 
 The potential advantages of homogeneous ANN are demonstrated on academic examples from computer science, systems theory and control engineering. Application of homogeneous ANN to practical problems is an interesting problem of future research.  A theoretical study about homogeneous physics-informed ANNs \cite{Lagaris_etal1998:TNN} may also be interesting, since many models of mathematical physics are homogeneous in generalized sense \cite{Polyakov2020:Book}.
 
 \section{Data and Code}
 The code for MATLAB and the trained ANNs for the considered examples are available at
 
 {\footnotesize  \texttt{https://gitlab.inria.fr/polyakov/homogeneous-artificial-neural-networks}}

\bibliographystyle{plain}

\end{document}